\newtheorem{theorem}{Theorem}
\newtheorem{lemma}{Lemma}
\newtheorem{proposition}{Proposition}
\newcommand\sbullet[1][.5]{\mathbin{\vcenter{\hbox{\scalebox{#1}{$\bullet$}}}}}
\newenvironment{hproof}{%
  \proof}{\endproof}
\title{
Improved Bilevel Model: Fast and Optimal Algorithm with Theoretical Guarantee
}
\author{
 Junyi Li \\
  University of Pittsburgh\\
  \texttt{junyili.ai@gmail.com}\\
  \And
  Bin Gu\\
  Nanjing University of Information Science \& Technology\\
  \texttt{jsgubin@gmail.com}\\
  \AND
  Heng Huang \\
  University of Pittsburgh \& JD Finance America Corporation\\
  \texttt{ henghuanghh@gmail.com}\\
}
\begin{document}

\maketitle

\begin{abstract}
Due to the hierarchical structure of many machine learning problems, bilevel programming is becoming more and more important recently, however, the complicated correlation between the inner and outer problem makes it extremely challenging to solve. Although several intuitive algorithms based on the automatic differentiation have been proposed and obtained success in some applications, not much attention has been paid to finding the optimal formulation of the bilevel model. Whether there exists a better formulation is still an open problem. In this paper, we propose an improved bilevel model which converges faster and better compared to the current formulation. We provide theoretical guarantee and evaluation results over two tasks: Data Hyper-Cleaning and Hyper Representation Learning. The empirical results show that our model outperforms the current bilevel model with a great margin. \emph{This is a concurrent work with \citet{liu2020generic} and we submitted to ICML 2020. Now we put it on the arxiv for record.}
\end{abstract}

\section{Introduction}
Bilevel programming \cite{willoughby1979solutions} defines a type of optimization where one problem is nested into another problem. Many machine learning problems can be united under the framework of bilevel programming, for example, hyper-parameter optimization~\cite{pedregosa2016hyperparameter, bergstra2011algorithms, lacoste2014sequential, luketina2016scalable, baydin2017online}, few-shot learning~\cite{finn2017model, koch2015siamese, vinyals2016matching, santoro2016meta, mishra2017simple} and the generative adversarial network~\cite{goodfellow2014generative, arjovsky2017wasserstein, gulrajani2017improved, brock2018large, radford2015unsupervised}. As a result, the bilevel programming and its applications in machine learning~\cite{maclaurin2015gradient, luketina2016scalable, franceschi2017forward, baydin2017online, lorraine2018stochastic} have drawn more and more attention recently, however, bilevel programming is much more challenging than classical optimization problems due to the complicated correlation between the outer problem and the inner problem, whose difficulty is compounded in the high-dimensional domain. Since the explicit optimal solution is usually not available, some intuitive approximate gradient-based methods have been proposed, such as reversible learning \cite{maclaurin2015gradient}, the forward-HG and reverse-HG \cite{franceschi2017forward}, truncated back-propagation \cite{shaban2018truncated}. 
These methods acquire impressive results when applied in the hyper-parameter optimization \cite{pedregosa2016hyperparameter, franceschi2017forward, shaban2018truncated} and few shot learning \cite{franceschi2018bilevel}. Despite the advance in algorithms and applications, not much attention has been paid to the formulation of the bilevel model, and whether there exists a better formulation that can lead to better results and faster convergence is still an open problem. 

In this paper, we focus on improving the current formulation of the bilevel model. Note in the process of solving a bilevel programming problem (the exact form will be introduced shortly), the first step is to solve the inner problem, however, the optimal way to solve it is still an open problem. In the current formulation, it simply states to choose the minimizer of the inner problem. In fact, this is both sub-optimal and unrealistic. On the one hand, the outer problem is the optimization objective, therefore, optimizing the inner problem solely for its own sake is sub-optimal from the outer problem's perspective, on the other hand, it is impossible to get the "optimum" if the inner problem has multiple equally good local minimizers. So we should come up with a better way to solve the inner problem, and a natural choice is to regularize the inner problem with the outer problem, more specifically, we want the inner problem to converge to a point that is not only optimal for the inner problem itself but also is good for the outer problem. Intuitively, this can filter out the "bad" values of the inner variable and therefore accelerate the convergence of the outer problem. We will elaborate more of the idea in the subsequent sections.
The main contributions of this paper are summarized as follows:
\begin{enumerate}[label=(\roman*)]
    \item{We propose an improved bilevel model with better and faster convergence and an efficient algorithm to solve this new model;}
    \item{We prove the convergence of the proposed algorithm and also provide a lower bound result showing the superior performance of our model;}
    \item{We conduct extensive experiments and ablation studies over different tasks and data-sets to demonstrate the effectiveness of our improved bilevel model.}
\end{enumerate}

\paragraph{Organization.}The remaining part of this paper is organized as follows: in Section~\ref{sec:rw}, we introduce the related work including the current bilevel model and commonly used algorithms solving it; in Section~\ref{sec:ibm}, we formally define our improved bilevel model and also propose an efficient algorithm to solve this model; in Section~\ref{sec:ta}, we present the theoretical analysis of the proposed model and the algorithm; in Section~\ref{sec:er}, we show experiments and ablation studies and make discussions; in Section~\ref{sec:c}, we summarize the paper and discuss future directions.

\paragraph{Notations.}In this paper, we denote model parameters with Greek lowercase letters (such as $\lambda$), vector space with Greek uppercase letters (such as $\Lambda$) and real valued functions with Latin lowercase letters (such as f and g). Moreover, we use $\nabla$ to denote the gradient of a function, while for partial derivatives, we either use $\nabla_k$ to represent the partial derivative with respect to the $k_{th}$ variable, or $\nabla_{\lambda}$ to represent the partial derivative with respect to the variable $\lambda$. Higher order derivatives follow similar rules. Finally, we use $[N]$ to denote the sequence of integers from 1 to N, and $||\sbullet||$ the euclidean norm for vectors or $\ell_2$-norm for matrices.

\section{Related Work}
\label{sec:rw}
In the mathematics and optimization literature, bilevel programming dates back to at least the 1960s when \citet{willoughby1979solutions} proposes a regularization method to solve the following problem:\\
\begin{equation}
\label{eq:opt-bilevel}
\begin{split}
    \underset{\omega\in \Omega^{*}}{\min}\ g(\omega)\;\;\;  s.t.\;\;\Omega^{*} = \{\omega| \omega\in \underset{\omega\in \Omega}{\arg\min}\ h(\omega)\} \\
\end{split}
\end{equation}
This problem is called "bilevel" as the constraint itself is an optimization problem. It has been extensively investigated since then \cite{ yamada2011minimizing, ferris1991finite, solodov2007explicit, sabach2017first, xu2004viscosity}. Particularly, \citet{sabach2017first} proposes a first order method called BiG-SAM to solve Eq.~(\ref{eq:opt-bilevel}), which allows the bilevel programming to be solved as like a "single level" optimization problem. 
The algorithmic details of the BiG-SAM  and other methods can be found in Appendix A. 

Recently, there is a rise of interest in the bilevel programming in the machine learning field due to its potential application in many problems.
In machine learning, we study a more complicated formulation involved with two sets of variables~\cite{couellan2016convergence, ghadimi2018approximation}:
\begin{equation}
\label{eq:classical-bilevel}
    \underset{\lambda \in \Lambda, \omega\in \Omega}{\min}\ g(\omega,\lambda)
      \;\;\;  s.t.\;\;  \omega = \underset{\omega\in \Omega}{\arg\min}\ h(\omega,\lambda)
\end{equation}
For the ease of reference, we will refer to $\omega$ as the inner variable and $h(\omega,\lambda)$ as the inner problem, similarly refer to $\lambda$ as the outer variable and $g(\omega,\lambda)$ as the outer problem. What's more, we will refer to Eq.~(\ref{eq:classical-bilevel}) as the basic bilevel model in contrast to our model in the subsequent discussions. The best-known methods for solving the basic bilevel model are gradient-based methods \cite{franceschi2017forward, shaban2018truncated, franceschi2018bilevel}, which gain popularity because of its simplicity and scalability. These methods typically apply back-propagation through time and automatic differentiation to compute the gradients. Based on how the gradients are stored and calculated, they can be divided into the forward-mode methods or the reverse-mode methods \cite{franceschi2017forward}. Roughly speaking, the reverse-mode runs faster, while the forward-mode consumes less storage. Based on if the back-propagation path are fully included, they can be divided into truncated or full methods \cite{shaban2018truncated}. More introduction of these methods can be found in Appendix A.

\section{Improved Bilevel Model}
\label{sec:ibm}
\subsection{New Bilevel Programming Formulation}

Most previous research work focuses on proposing a better algorithm to solve the basic bilevel model Eq.~(\ref{eq:classical-bilevel}), however, we instead focus on finding a new formulation of bilevel model which has better and faster convergence than Eq.~(\ref{eq:classical-bilevel}), though we do introduce an efficient algorithm for our improved formulation in Section~\ref{sec:alg}. To the best of our knowledge, we are the first to explicitly address this improved formulation, instead of the original one.
More precisely, we propose to solve the following model:
\begin{equation}
\label{eq:hier-bilevel}
\begin{split}
      \underset{\lambda \in \Lambda, \omega\in \Omega}{\min}\  g(\omega,\lambda)
      \;\;\;  s.t.\;\;   \omega = \underset{\omega\in \Omega^{*}_{\lambda}}{\arg\min}\ g(\omega,\lambda) \;\;\; where \;\;\; \Omega^{*}_{\lambda} = \{\omega, \omega\in \underset{\omega\in \Omega}{\arg\min}\ h(\omega,\lambda)\}
\end{split}
\end{equation}
Compared to the formulation in Eq.~(\ref{eq:classical-bilevel}), the outer problem $g(\omega,\lambda)$ shows twice in our model. One is in the objective which is the same as in Eq.~(\ref{eq:classical-bilevel}), while the other is shown in the constraints, which differs from the original formulation and is the key innovation of our model. We select $\omega$ in the set $\Omega^{*}_{\lambda}$ that achieves the lowest outer objective value, in other words, we encourage the model to explore the "good" $\omega$ and avoid the "bad" ones. 

The intuition behind this design is as follows: With the regularization, the outer problem narrows down its search space to where the outer objective value is small. Naturally, this smaller and better search space will accelerate the convergence. From another perspective, this leads to more efficient use of the outer problem's information. In the original basic formulation, the signal of the outer problem is back-propagated only if the inner problem is (approximately) solved. In contrast, the outer problem passes information frequently during the inner problem solving process in our model. Not surprisingly, our model will converge faster and better compared to the original one. The theoretical analysis and empirical results will be presented in the subsequent sections. 

\subsection{Proposed Algorithms}
\label{sec:alg}
In this section, we present an efficient gradient-based algorithm to solve Eq.~(\ref{eq:hier-bilevel}). As shown in Algorithm~\ref{alg:overall-dynamic}, we apply the gradient descent step (Line 10) to update the outer variable $\lambda$ within each iteration, where G is an estimation of the hyper-gradient with respect to $\lambda$. There are two phases to calculate $G$. Firstly, we solve the inner problem (Line 4), then we apply automatic differentiation to compute the gradient (Line 5 - 9). In the first phase, we solve the following inner problem:
\begin{equation}
\label{eq:inner}
\begin{split}
 \underset{\omega\in \Omega^{*}_{\lambda}}{\arg\min}\ g(\omega,\lambda) \;\;\; where \;\;\; \Omega^{*}_{\lambda} = \{\omega, \omega\in \underset{\omega\in \Omega}{\arg\min}\ h(\omega,\lambda)\}
\end{split}
\end{equation}.
In fact, Eq.~(\ref{eq:inner}) degenerates to Eq.~(\ref{eq:opt-bilevel}) when $\lambda$ is fixed, so it can be solved efficiently with a first order method such as the BiG-SAM, we adopt BiG-SAM \cite{sabach2017first} as it is both simple to implement and achieves SOTA convergence rate. The procedure to solve the inner problem is in Algorithm~\ref{alg:inner_dynamic}. Note that we evaluate the gradient with respect to the outer function (Line 6 of Algorithm~\ref{alg:inner_dynamic}) within each BiG-SAM step to regularize the inner problem, what's more, an interesting point deserves to mention is that Algorithm~\ref{alg:inner_dynamic} degenerates to a gradient descent optimizer when we set $\alpha_k$ as 1 and we can use it to solve the inner problem of the basic bilevel model. 
We will perform an ablation study later to study this phenomenon.

 \begin{minipage}{0.5\textwidth}
 \vspace{-0.8cm}
 \centering
\begin{algorithm}[H]
  \caption{Optimization of inner problem}
  \label{alg:inner_dynamic}
\begin{algorithmic}[1]
  \STATE {\bfseries Input:} Initial inner variable $\boldsymbol{\omega_{0}}$;\ current outer variable $\boldsymbol{\lambda}$;\ number of iterations \textbf{K};\ hyper-parameters \textbf{t},\ \textbf{s}\ and $\boldsymbol{\alpha}$;
  \STATE {\bfseries Output:} Optimal inner variable $\boldsymbol{\hat{\omega}_{\lambda}}$;
  \FOR{$k = 0 \to K$} \STATE (BiG-SAM step)
    \STATE $\theta_{k+1} = \omega_{k} - t\nabla_{1} h(\omega_{k},\lambda)$
    \STATE $\phi_{k+1} = \omega_{k} - s\nabla_{1} g(\omega_{k},\lambda)$
    \STATE $\omega_{k+1} = \alpha_{k+1}\theta_{k+1} + (1 - \alpha_{k+1}) \phi_{k+1}$
  \ENDFOR
  \STATE {\bfseries Return} $\hat{\omega}_{\lambda} = \omega_{K}$
\end{algorithmic}
\end{algorithm}
 \end{minipage}
 \begin{minipage}{0.5\textwidth}
 \vspace{-0.3cm}
 \centering
\begin{algorithm}[H]
  \caption{Optimization of outer problem (main algorithm)}
  \label{alg:overall-dynamic}
\begin{algorithmic}[1]
  \STATE {\bfseries Input:} Initial outer variable $\boldsymbol{\lambda_0}$, learning rate $\boldsymbol{\eta}$
  \STATE {\bfseries Output:} Optimal outer variable $\boldsymbol{\lambda_{T}}$;
  \FOR{t = 1 to T\ -\ 1}
  \STATE Initialize inner variable $\boldsymbol{\omega_0}$ and invoke \textbf{Algorithm~\ref{alg:inner_dynamic}} to get $\hat{\omega}_{\lambda}$ (solve inner problem);
  \STATE Let $\alpha = \nabla_{1} g(\hat{\omega}_{\lambda}, \lambda_t)$, $G = \nabla_{2} g(\hat{\omega}_{\lambda}, \lambda_t)$
  \FOR{k = K - 1 $\to$ 1}
  \STATE $G = G + \alpha \nabla_2 \Phi(\omega_{k+1},\lambda_t)$
  \STATE $\alpha = \alpha \nabla_1 \Phi(\omega_{k+1},\lambda_t)$
  \ENDFOR
  \STATE $\lambda_{t+1} = \lambda_{t} - \eta G$ 
  \ENDFOR
  \STATE {\bfseries Return} $\lambda_T$
\end{algorithmic}
\end{algorithm}
\end{minipage}

In the next phase, we evaluate hyper-gradient with automatic differentiation. Suppose we solve the inner problem with $K$ steps, then its dynamics is formally written as:
\begin{equation}
\label{eq:re-bilevel}
\begin{split}
    \hat{\omega}_{\lambda} = \omega_K,\; \omega_{k+1} = \Phi(\omega_{k},\lambda),\ k \in [K]
\end{split}
\end{equation}
where $\Phi(\omega_{k},\lambda)$ denotes the dynamics of BiG-SAM step.
A procedure to calculate the hyper-gradient based on the reverse mode is shown in Line 5 - 9 of Algorithm~\ref{alg:overall-dynamic} (the derivation of this can be found in Appendix B.1). The convergence property of Algorithm~\ref{alg:overall-dynamic} will be shown in Theorem~\ref{th:convergence} of Section~\ref{sec:ta}.

 
\paragraph{Complexity Analysis.}Now we analyze the time complexity of our algorithm. For simplicity, we analyze the time complexity of calculating hyper-gradient in terms of the inner iterations K (the number of iterations in Algorithm~\ref{alg:inner_dynamic}). In our algorithm, calculation of hyper-gradient is divided into two phases. In phase 1, we calculate $\hat{\omega}_{\lambda}$ in Algorithm~\ref{alg:inner_dynamic}, which has $O(K)$ complexity. In phase 2, we apply automatic differentiation. It is well-known that the complexity of calculating derivatives with automatic differentiation has the same time complexity as of the function evaluation, which can also be verified from the step~5~-~9 in Algorithm~\ref{alg:overall-dynamic}. So the overall time complexity is $O(K)$. As a result, we maintain the same $O(K)$ time complexity as those gradient-based methods (see Appendix A for a reference of these methods) used for solving the basic bilevel model, though we may have a larger const factor, as we need to extraly evaluate $\nabla_{1} g(\omega_{k},\lambda)$ in every BiG-SAM step, however, we believe this is acceptable compared to the merits of our model. In fact, we can avoid calculating $\nabla_{1} g(\omega_{k},\lambda)$ to save computation by setting $\alpha_k$ as 1. We will elaborate more about this in the ablation study.

\section{Theoretical Analysis}
\label{sec:ta}
In this section, we introduce two important properties of our model. In Theorem~\ref{th:convergence}, we study the convergence property of Algorithm~\ref{alg:overall-dynamic}. Next in Theorem~\ref{theo:lb}, we show that the minimum of our model is guaranteed to be less than that of the basic model, this result directly shows the advantage of our model in optimizing the outer objective. The necessary assumptions made in the proof is summarized as follows:

\begin{enumerate}[label=(\roman*)]
    \item $\Lambda \subset R^{m}$ is a compact set;
    \item $\Omega  \subset R^{n}$ is a compact set;
    \item g($\omega$, $\lambda$) is Lipschitz continuous with constant $L_g$, Lipschitz differentiable with constant $L_{\nabla g}$, and g($\sbullet$, $\lambda$) is uniformly strongly convex with parameter $\sigma_g$;
    \item h($\omega$, $\lambda$) is Lipschitz continuous with constant $L_h$, Lipschitz differentiable with constant $L_{\nabla_h}$; h($\sbullet$, $\lambda$) is convex and $\nabla^2 h(\sbullet, \lambda)$ exists and is bounded below with constant $S_{\nabla^2 h}$ when $\nabla h(\sbullet, \lambda) \ne 0$;
\end{enumerate}

Now we discuss the convergence property of Algorithm~\ref{alg:overall-dynamic}. For the simplicity of notation, we use $f(\lambda)$ to denote $g(\omega_{\lambda},\lambda)$ where $\omega_{\lambda}$ denotes the exact solution of the inner prblem. Similarly, we use $f_K(\lambda)$ to denote $g(\hat{\omega}_{\lambda},\lambda)$ where $\hat{\omega}_{\lambda}$ is the output of Algorithm~\ref{alg:inner_dynamic}. Before getting into the details of the main theorem, we firstly show the existence of minimizer of $f(\lambda)$. In fact, due to additional regularization in the constraint of Eq.~(\ref{eq:hier-bilevel}), $f(\lambda)$ may be discontinuous, and therefore dose not attain the minimum. However, Proposition~\ref{prop1} and Lemma~\ref{lemma1} show that the continuity property holds under our assumption. Firstly, Proposition~\ref{prop1} shows a result about the convergence of $\{\Omega^{*}_{\lambda_n}\}_{n\in N}$, more precisely :
\begin{proposition}\label{prop1}
let $(\lambda_n)_{n\in N}$ be a sequence converges to $\lambda$, with the above assumptions (i) - (iv) hold, $\Omega^{*}_{\lambda}$ is the Kuratowski limit \cite{kuratowski2014topology} of $\{\Omega^{*}_{\lambda_n}\}_{n\in N}$, \emph{i.e.}:

\begin{equation}
\begin{split}
    \underset{n\to\infty}{Li} \Omega^{*}_{\lambda_n} &= \{\omega\in\Omega^*_{\lambda}|\underset{n\to\infty}{\limsup}\ d(\omega, \Omega^{*}_{\lambda_n})=0\}=\Omega^*_{\lambda}\\
    \underset{n\to\infty}{Ls} \Omega^{*}_{\lambda_n} &= \{\omega\in\Omega^*_{\lambda}|\underset{n\to\infty}{\liminf}\ d(\omega, \Omega^{*}_{\lambda_n})=0\}=\Omega^*_{\lambda}\\
\end{split}
\end{equation}
where:\\
\begin{equation}
\begin{split}
    \Omega^{*}_{\lambda} = \{\omega, \omega\in \arg\min\ h(\omega,\lambda)\}\ ,\Omega^{*}_{\lambda_n} = \{\omega, \omega\in \arg\min\ h(\omega,\lambda_n)\}
\end{split}
\end{equation}
\end{proposition}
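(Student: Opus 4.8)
The plan is to prove the two Painlevé--Kuratowski inclusions separately and then combine them. Since the inner limit is always contained in the outer limit, i.e. $\underset{n\to\infty}{Li}\,\Omega^{*}_{\lambda_n} \subseteq \underset{n\to\infty}{Ls}\,\Omega^{*}_{\lambda_n}$, it suffices to establish the outer inclusion $\underset{n\to\infty}{Ls}\,\Omega^{*}_{\lambda_n} \subseteq \Omega^{*}_{\lambda}$ and the inner inclusion $\Omega^{*}_{\lambda} \subseteq \underset{n\to\infty}{Li}\,\Omega^{*}_{\lambda_n}$; chaining these gives $\Omega^{*}_{\lambda} \subseteq Li \subseteq Ls \subseteq \Omega^{*}_{\lambda}$, which forces both limits to equal $\Omega^{*}_{\lambda}$. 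Throughout I will use only the compactness of $\Omega$ (assumption (ii)) and the regularity of $h$ (assumption (iv)), since $\Omega^{*}_{\lambda}$ depends on $h$ alone and the hypotheses on $g$ are irrelevant for this statement.

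The outer inclusion is the easy, upper-semicontinuity direction. I would take $\omega \in Ls\,\Omega^{*}_{\lambda_n}$, so that along a subsequence there exist $\omega_{n_k} \in \Omega^{*}_{\lambda_{n_k}}$ with $\omega_{n_k} \to \omega$. For every $\omega' \in \Omega$, minimality of $\omega_{n_k}$ gives $h(\omega_{n_k}, \lambda_{n_k}) \le h(\omega', \lambda_{n_k})$; passing to the limit $k \to \infty$ and using the joint continuity of $h$ (a consequence of its Lipschitz continuity) together with $\lambda_{n_k} \to \lambda$ yields $h(\omega, \lambda) \le h(\omega', \lambda)$ for all $\omega'$, hence $\omega \in \Omega^{*}_{\lambda}$.

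The inner inclusion is the hard, lower-semicontinuity direction, and this is where the Hessian lower bound of assumption (iv) is essential. I first record that the value function $v(\lambda) := \min_{\omega\in\Omega} h(\omega,\lambda)$ is Lipschitz, since from the uniform Lipschitz continuity of $h$ in $\lambda$ one gets $|v(\lambda)-v(\lambda')| \le L_h\|\lambda-\lambda'\|$. The main technical step is a uniform quadratic-growth (error-bound) inequality
\begin{equation}
h(\omega,\lambda) - v(\lambda) \;\ge\; \tfrac{1}{2}S_{\nabla^2 h}\, d(\omega,\Omega^{*}_{\lambda})^2 \qquad \text{for all } \omega\in\Omega,\ \lambda\in\Lambda .
\end{equation}
To prove it (the case $\omega\in\Omega^{*}_{\lambda}$ being trivial), let $\bar{\omega}$ be the projection of $\omega$ onto the closed convex set $\Omega^{*}_{\lambda}$ and set $\psi(s) = h(\bar{\omega}+s(\omega-\bar{\omega}),\lambda)$. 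Then $\psi'(0) = \nabla_1 h(\bar{\omega},\lambda)\cdot(\omega-\bar{\omega}) = 0$ because $\bar{\omega}$ minimizes $h(\sbullet,\lambda)$; moreover, since $\bar{\omega}$ is the \emph{nearest} minimizer, every interior point $\omega_s$ of the segment satisfies $\|\omega-\omega_s\| < \|\omega-\bar{\omega}\|$ and so cannot lie in $\Omega^{*}_{\lambda}$, whence $\nabla_1 h(\omega_s,\lambda)\ne 0$ by convexity and assumption (iv) gives $\psi''(s) \ge S_{\nabla^2 h}\|\omega-\bar{\omega}\|^2$ on $(0,1]$. Taylor's formula with integral remainder (valid since $\nabla_1 h$ is Lipschitz along the segment) then yields the displayed bound.

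Finally I combine the two facts. Fixing $\omega^{*}\in\Omega^{*}_{\lambda}$, so that $h(\omega^{*},\lambda)=v(\lambda)$, and applying the error bound at each $\lambda_n$,
\begin{equation}
\tfrac{1}{2}S_{\nabla^2 h}\, d(\omega^{*},\Omega^{*}_{\lambda_n})^2 \;\le\; h(\omega^{*},\lambda_n) - v(\lambda_n),
\end{equation}
whose right-hand side tends to $h(\omega^{*},\lambda)-v(\lambda)=0$ by continuity of $h(\omega^{*},\sbullet)$ and of $v$. Hence $d(\omega^{*},\Omega^{*}_{\lambda_n})\to 0$, which is exactly the condition $\omega^{*}\in Li\,\Omega^{*}_{\lambda_n}$, establishing $\Omega^{*}_{\lambda}\subseteq Li\,\Omega^{*}_{\lambda_n}$ and completing the proof. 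I expect the uniform quadratic-growth inequality to be the main obstacle: the Hessian bound is assumed only away from critical points, so the delicate points are verifying via the projection/convexity argument that the open segment never meets the minimizer set, and confirming that $S_{\nabla^2 h}$ is genuinely positive and independent of $\lambda$ so that the resulting error bound is uniform.
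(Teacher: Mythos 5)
Your proof is correct, but the key step is carried out by a genuinely different mechanism than the paper's. The paper also reduces everything to showing $d(\omega,\Omega^{*}_{\lambda_n})\to 0$ for each $\omega\in\Omega^{*}_{\lambda}$, but it works at the level of \emph{gradients}: it first shows $\|\nabla_1 h(\omega,\lambda_n)\|\le L_{\nabla h}\|\lambda-\lambda_n\|$ is small (using $\nabla_1 h(\omega,\lambda)=0$), then writes $\nabla_1 h(\omega,\lambda_n)-\nabla_1 h(\omega^{*},\lambda_n)=\nabla_{11}h(\omega_0,\lambda_n)(\omega-\omega^{*})$ for the nearest point $\omega^{*}\in\Omega^{*}_{\lambda_n}$ via a mean-value argument and inverts the Hessian to bound $\|\omega-\omega^{*}\|\le S_{\nabla^2 h}^{-1}\|\nabla_1 h(\omega,\lambda_n)\|$. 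You instead work at the level of \emph{function values}: a uniform quadratic-growth bound $h(\omega,\lambda)-v(\lambda)\ge\tfrac12 S_{\nabla^2 h}\,d(\omega,\Omega^{*}_{\lambda})^2$ combined with Lipschitz continuity of the value function $v$. Both routes consume the same hypotheses ($L_{\nabla h}$ or $L_h$, plus the Hessian lower bound away from critical points), and the resulting rates are comparable ($d\lesssim\|\lambda-\lambda_n\|$ for the paper versus $d\lesssim\|\lambda-\lambda_n\|^{1/2}$ for yours — the paper's is quantitatively sharper, though only the limit matters here). Your version buys two things in rigor: the exact mean-value identity the paper invokes does not hold for vector-valued maps like $\nabla_1 h$ (one needs an integral form or a componentwise argument), and the paper's claim that the intermediate point $\omega_0$ lies outside $\Omega^{*}_{\lambda_n}$ (needed to invoke the Hessian bound) is asserted without justification, whereas your projection argument — interior points of the segment are strictly closer to $\omega$ than the nearest minimizer, hence are not minimizers, hence by convexity have nonzero gradient — supplies exactly the missing reasoning. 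You also prove the outer inclusion $Ls\subseteq\Omega^{*}_{\lambda}$ explicitly, which the paper dismisses with ``similarly''; under the paper's restricted definition of the limits (sets of $\omega\in\Omega^{*}_{\lambda}$) this is automatic, but your treatment covers the standard Painlev\'e--Kuratowski definition as well. One caveat common to both proofs: each implicitly treats minimizers over the compact set $\Omega$ as unconstrained critical points ($\nabla_1 h=0$, respectively $\psi'(0)=0$); your version degrades gracefully since first-order optimality on a convex set still gives $\psi'(0)\ge 0$, which is all the Taylor bound needs.
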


\begin{hproof}
Follow the definition of Kuratowski convergence, we need to prove the Kuratowski limits inferior and superior of the sequence agree for every point $\omega$ in $\Omega^{*}_{\lambda}$, which we show by proving $\underset{n\to\infty}{\lim}\ d(\omega, \Omega^{*}_{\lambda_n})$ exists and equals 0, the detailed proof is shown in Appendix D.
\end{hproof}

With Proposition~\ref{prop1}, we prove the the existence of minimizer of $f(\lambda)$ in the following lemma:

\begin{lemma}\label{lemma1} With the above assumptions (i) - (iv) hold, $f(\lambda)$ is continuous in $\Lambda$ and admits a minimizer.
\end{lemma}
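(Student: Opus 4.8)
The plan is to read $f(\lambda)$ as the optimal-value function of the parametric program $f(\lambda)=\min_{\omega\in\Omega^{*}_{\lambda}}g(\omega,\lambda)$ and to prove its continuity by a Berge-type maximum-theorem argument, the key input being the set convergence of the feasible correspondence $\lambda\mapsto\Omega^{*}_{\lambda}$ supplied by Proposition~\ref{prop1}. Once continuity is in hand, existence of a minimizer is immediate from compactness of $\Lambda$.

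First I would settle well-posedness. For fixed $\lambda$, assumptions (ii) and (iv) make $h(\sbullet,\lambda)$ a continuous convex function on the compact set $\Omega$, so $\Omega^{*}_{\lambda}=\arg\min_{\omega}h(\omega,\lambda)$ is nonempty, closed, and convex. Since $g(\sbullet,\lambda)$ is uniformly strongly convex (assumption (iii)), it attains a \emph{unique} minimizer $\omega_{\lambda}$ over this closed convex set; hence $f(\lambda)=g(\omega_{\lambda},\lambda)$ is single-valued and well defined.

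Next I would prove sequential continuity. Fix $\lambda$ and let $\lambda_n\to\lambda$; Proposition~\ref{prop1} gives $\underset{n\to\infty}{Li}\,\Omega^{*}_{\lambda_n}=\underset{n\to\infty}{Ls}\,\Omega^{*}_{\lambda_n}=\Omega^{*}_{\lambda}$. For the upper estimate, the inner-limit condition lets me pick a recovery sequence $\omega_n\in\Omega^{*}_{\lambda_n}$ with $\omega_n\to\omega_{\lambda}$; feasibility and optimality then give $f(\lambda_n)\le g(\omega_n,\lambda_n)$, and the joint Lipschitz continuity of $g$ (assumption (iii)) yields $g(\omega_n,\lambda_n)\to g(\omega_{\lambda},\lambda)=f(\lambda)$, so $\limsup_n f(\lambda_n)\le f(\lambda)$. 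For the lower estimate, let $\omega_n\in\Omega^{*}_{\lambda_n}$ attain $f(\lambda_n)$; since $\Omega$ is compact I extract a subsequence $\omega_{n_k}\to\bar\omega$, the outer-limit condition forces $\bar\omega\in\Omega^{*}_{\lambda}$, and continuity of $g$ gives $\lim_k f(\lambda_{n_k})=g(\bar\omega,\lambda)\ge\min_{\omega\in\Omega^{*}_{\lambda}}g(\omega,\lambda)=f(\lambda)$. Choosing the subsequence along which $f(\lambda_n)$ realizes its $\liminf$ upgrades this to $\liminf_n f(\lambda_n)\ge f(\lambda)$. Combining the two estimates gives $f(\lambda_n)\to f(\lambda)$, i.e.\ continuity of $f$ on $\Lambda$. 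Finally, $f$ continuous on the compact set $\Lambda$ (assumption (i)) attains its minimum by the Weierstrass theorem.

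I expect the lower-semicontinuity direction to be the main obstacle: unlike the upper estimate, it requires controlling the limit of the \emph{optimizers} $\omega_n$ rather than a freely chosen recovery sequence, and it is precisely here that the outer Kuratowski limit of Proposition~\ref{prop1} (to keep the limit point feasible) and the \emph{joint} continuity of $g$ in $(\omega,\lambda)$ --- not merely separate continuity --- are indispensable. A secondary point to handle carefully is that $f$ could a priori be discontinuous (as the authors note), so the whole argument hinges on converting Proposition~\ref{prop1}'s set convergence into the two one-sided value estimates rather than assuming any regularity of $\lambda\mapsto\Omega^{*}_{\lambda}$ directly.
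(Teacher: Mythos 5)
Your proof is correct and uses the same essential ingredients as the paper (Proposition~\ref{prop1}, compactness of $\Omega$, continuity of $g$ and $h$), but the decomposition is genuinely different. The paper takes the sequence of inner minimizers $\omega_{\lambda_{k_n}}$, extracts a convergent subsequence, proves its limit $\hat{\omega}$ is feasible (via continuity of $h$) and optimal (via an indicator-function penalization of the constraint combined with Proposition~\ref{prop1}), concludes $\hat{\omega}=\omega_{\lambda}$, and only then transfers to the values $f(\lambda_{k_n})\to f(\lambda)$. You instead run the classical Berge-type two-sided value estimate: a recovery sequence from the inner Kuratowski limit gives $\limsup_n f(\lambda_n)\le f(\lambda)$, and cluster points of the optimizers give $\liminf_n f(\lambda_n)\ge f(\lambda)$. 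Your route buys two things: it handles full-sequence convergence cleanly (the paper only exhibits subsequential convergence and leaves the standard subsequence-of-subsequences step implicit), and it avoids the paper's somewhat awkward inequality $g(\omega_{\lambda_{k_n}},\lambda_{k_n})\le g(\omega,\lambda_{k_n})+I_{\Omega^{*}_{\lambda_{k_n}}}(\omega)$, which only makes sense if the indicator is understood as an extended-valued penalty. One caveat: in your lower estimate you invoke ``the outer-limit condition'' of Proposition~\ref{prop1} to conclude $\bar\omega\in\Omega^{*}_{\lambda}$, but the displayed formulas in Proposition~\ref{prop1} define $Li$ and $Ls$ as subsets of $\Omega^{*}_{\lambda}$ from the outset, so they literally establish only the inner inclusion (every point of $\Omega^{*}_{\lambda}$ is approximated by points of $\Omega^{*}_{\lambda_n}$), not that cluster points of selections $\omega_n\in\Omega^{*}_{\lambda_n}$ land in $\Omega^{*}_{\lambda}$. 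You should either prove that fact directly --- one line from continuity of $h$: $h(\bar\omega,\lambda)=\lim_k h(\omega_{n_k},\lambda_{n_k})\le\lim_k h(\omega,\lambda_{n_k})=h(\omega,\lambda)$ for all $\omega$, exactly as the paper does --- or make explicit that you are using the standard (unrestricted) definition of the Kuratowski upper limit.
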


\begin{hproof}
To prove the continuity of $f(\lambda)$, for any $\lambda \in \Lambda$, and a sequence $(\lambda_n)_{n\in N} \to \lambda$, we need to prove $f(\lambda_n)_{n\in N} \to f(\lambda)$, which relies on the convergence result in Proposition~\ref{lemma1}. As for the existence of the minimizer of $f(\lambda)$, since $\Lambda$ is compact and $f(\lambda)$ is continuous, we can easily get $f(\lambda)$ attains a minimum by the extreme value theorem. The detailed proof is shown in the Appendix~D.
\end{hproof}

Now that $f(\lambda)$ is continuous and attains a minimizer, we can prove the convergence property of Algorithm~\ref{alg:overall-dynamic}, where we show by the convergence of $f_{K}(\lambda)$ to $f(\lambda)$:
\begin{theorem}\label{th:convergence}
With the assumptions (i)-(iv) above hold, 
we have as $K \to \infty$ for all $\lambda \in \Lambda$:
\begin{enumerate}[label=(\roman*)]
    \item $\min f_K(\lambda) \to \min f(\lambda)$;
    \item $\arg\min f_K(\lambda) \to \arg\min f(\lambda)$;
\end{enumerate}
\end{theorem}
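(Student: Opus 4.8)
The plan is to reduce both claims to a single uniform approximation estimate, namely that $f_K \to f$ uniformly on the compact set $\Lambda$, and then to invoke the standard stability of a minimization problem under uniform convergence. First I would control the pointwise gap. For a fixed $\lambda$, the inner problem Eq.~(\ref{eq:inner}) is exactly a simple bilevel problem of the form Eq.~(\ref{eq:opt-bilevel}), and Algorithm~\ref{alg:inner_dynamic} is precisely BiG-SAM applied to it. Since $h(\sbullet,\lambda)$ is convex, its minimizer set $\Omega^{*}_{\lambda}$ is convex and (being $\arg\min$ of a continuous function on the compact $\Omega$) nonempty and compact; because $g(\sbullet,\lambda)$ is $\sigma_g$-strongly convex, the constrained minimizer $\omega_{\lambda}$ is unique, so $f(\lambda)=g(\omega_{\lambda},\lambda)$ is well defined. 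The convergence guarantee of BiG-SAM \cite{sabach2017first} then yields $\hat{\omega}_{\lambda}=\omega_K\to\omega_{\lambda}$, and the $L_g$-Lipschitz continuity of $g$ gives $|f_K(\lambda)-f(\lambda)|=|g(\hat{\omega}_{\lambda},\lambda)-g(\omega_{\lambda},\lambda)|\le L_g\|\hat{\omega}_{\lambda}-\omega_{\lambda}\|$.

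The crucial upgrade is to make this bound uniform in $\lambda$. The BiG-SAM rate depends only on the step sizes, the schedule $\alpha_k$, the constants $L_g,L_{\nabla g},\sigma_g,L_h,L_{\nabla h}$, and the initial distance $\|\omega_0-\omega_{\lambda}\|$. Assumptions (iii)--(iv) make the former constants uniform in $\lambda$, while compactness of $\Omega$ (assumption (ii)) bounds the initial distance uniformly. Hence there should exist a rate $\rho(K)\to 0$, independent of $\lambda$, with $\sup_{\lambda\in\Lambda}\|\hat{\omega}_{\lambda}-\omega_{\lambda}\|\le\rho(K)$, and therefore $\sup_{\lambda\in\Lambda}|f_K(\lambda)-f(\lambda)|\le L_g\rho(K)=:\varepsilon_K\to 0$, i.e.\ $f_K\to f$ uniformly.

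Granting uniform convergence, claim (i) follows from a two-sided squeeze. Writing $\lambda^{*}_{K}\in\arg\min f_K$ and $\lambda^{*}\in\arg\min f$ (both exist since $\Lambda$ is compact and both functions are continuous, the latter by Lemma~\ref{lemma1}), I have $\min f_K=f_K(\lambda^{*}_{K})\le f_K(\lambda^{*})\le\min f+\varepsilon_K$ and $\min f_K=f_K(\lambda^{*}_{K})\ge f(\lambda^{*}_{K})-\varepsilon_K\ge\min f-\varepsilon_K$, so $\min f_K\to\min f$. For claim (ii) I would use an outer-semicontinuity argument: take any cluster point $\bar{\lambda}$ of $\{\lambda^{*}_{K}\}$, say $\lambda^{*}_{K_j}\to\bar{\lambda}$; then by continuity of $f$ and uniform convergence, $f(\bar{\lambda})=\lim_j f(\lambda^{*}_{K_j})\le\lim_j(f_{K_j}(\lambda^{*}_{K_j})+\varepsilon_{K_j})=\lim_j\min f_{K_j}=\min f$, whence $\bar{\lambda}\in\arg\min f$. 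This establishes $\arg\min f_K\to\arg\min f$ in the Kuratowski (upper-limit) sense.

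The main obstacle is the uniformity step: establishing $\hat{\omega}_{\lambda}\to\omega_{\lambda}$ uniformly in $\lambda$ requires that the BiG-SAM rate constants can be chosen independently of $\lambda$, and this is where assumptions (ii)--(iv) do the real work, via a careful bookkeeping of how the Sabach--Shtern rate depends on the problem data together with the uniform bound on $\|\omega_0-\omega_{\lambda}\|$ over the compact $\Omega$. A secondary subtlety worth flagging is that $f_K$ need not approach $f$ from above, since the finite-$K$ iterate $\hat{\omega}_{\lambda}$ generally lies outside $\Omega^{*}_{\lambda}$ and may satisfy $g(\hat{\omega}_{\lambda},\lambda)<f(\lambda)$; this rules out a simple monotone/Dini argument and is precisely why the uniform estimate, rather than mere pointwise convergence, is indispensable.
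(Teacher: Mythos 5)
Your proposal is correct and follows essentially the same route as the paper: uniform convergence of the BiG-SAM iterates $\hat{\omega}_{\lambda}\to\omega_{\lambda}$ (because the contraction constants $\bar{\alpha},\bar{\beta}$ depend only on $L_{\nabla g},\sigma_g$ and the initial distance, all uniform over $\lambda$), combined with the $L_g$-Lipschitz continuity of $g$, yields $f_K\to f$ uniformly on the compact $\Lambda$, after which stability of minima and minimizers under uniform convergence gives both claims. The only cosmetic difference is that you prove the stability step inline (the two-sided squeeze and the cluster-point argument) whereas the paper cites it as Lemma~\ref{lemma3} (Theorem A.1 of \citet{franceschi2018bilevel}).
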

\begin{hproof}
if we can prove $f_{K}(\lambda) \to f(\lambda)$ uniformly as $K\to\infty$, then the two property is a direct consequence of the stability of minimum and minimizers in optimization  \cite{dontchev2006well, franceschi2018bilevel}. To prove the above uniform convergence, we make use of the Lipschitz continuity of $g(\omega,\lambda)$ and the uniform convergence of $\omega_K \to \omega_{\lambda}$,
A more precise and rigorous proof is shown in Appendix D.
\end{hproof}

Theorem~\ref{th:convergence} states that the output of Algorithm~\ref{alg:overall-dynamic} will converge to the minimizer of $f(\lambda)$ in the limit, which shows the effectiveness of the algorithm. Next we present Theorem~\ref{theo:lb}, which compares the minimum of our model and that of the basic model:
\begin{theorem}\label{theo:lb}
Let $\lambda_{c}$ and $\lambda_{h}$ denote the minimizer of Eq.~(\ref{eq:classical-bilevel}) and Eq.~(\ref{eq:hier-bilevel}) respectively. Then the minimum of Eq.~(\ref{eq:hier-bilevel}) is less than that of Eq.~(\ref{eq:classical-bilevel}), i.e.:
\begin{equation}
    g(\omega_{h,\lambda_h},\lambda_{h}) \le g(\omega_{c,\lambda_c},\lambda_{c})
\end{equation}
where $\omega_{h,\lambda}$ denotes the solution of inner problem for a fixed $\lambda$ in Eq.~(\ref{eq:hier-bilevel}) and $\omega_{c,\lambda}$ denotes the solution of inner problem for a fixed $\lambda$ in Eq.~(\ref{eq:classical-bilevel}).
\end{theorem}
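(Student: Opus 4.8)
The plan is to exploit the fact that both formulations constrain the inner variable to the \emph{same} feasible set $\Omega^{*}_{\lambda} = \arg\min_{\omega\in\Omega} h(\omega,\lambda)$, and to observe that Eq.~(\ref{eq:hier-bilevel}) differs from Eq.~(\ref{eq:classical-bilevel}) only in that it selects the element of this set that is best for the outer objective. First I would fix an arbitrary $\lambda\in\Lambda$ and compare the two inner solutions. By definition $\omega_{h,\lambda}$ minimizes $g(\sbullet,\lambda)$ over $\Omega^{*}_{\lambda}$, whereas $\omega_{c,\lambda}$ is merely \emph{some} minimizer of $h(\sbullet,\lambda)$, hence $\omega_{c,\lambda}\in\Omega^{*}_{\lambda}$. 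Since $\omega_{h,\lambda}$ attains the minimum of $g(\sbullet,\lambda)$ over a set that already contains $\omega_{c,\lambda}$, I obtain the pointwise inequality
\begin{equation}
g(\omega_{h,\lambda},\lambda)\;\le\;g(\omega_{c,\lambda},\lambda)\qquad\text{for every }\lambda\in\Lambda.
\end{equation}

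Next I would combine this pointwise bound with the global optimality of $\lambda_h$. Recall that $\lambda_h$ minimizes the outer objective of Eq.~(\ref{eq:hier-bilevel}), i.e.\ it minimizes $f(\lambda)=g(\omega_{h,\lambda},\lambda)$ over $\Lambda$, and this minimum is attained by Lemma~\ref{lemma1}. Specializing the pointwise inequality to $\lambda=\lambda_c$ and then invoking that $\lambda_h$ is a global minimizer of $f$, I would chain
\begin{equation}
g(\omega_{h,\lambda_h},\lambda_h)\;=\;\min_{\lambda\in\Lambda} g(\omega_{h,\lambda},\lambda)\;\le\;g(\omega_{h,\lambda_c},\lambda_c)\;\le\;g(\omega_{c,\lambda_c},\lambda_c),
\end{equation}
which is exactly the claimed bound.

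There is no serious analytical obstacle here; the result is essentially a ``minimize over a subset'' argument, and the genuine difference between the two models collapses to one elementary inequality. The only points that require care are well-definedness and existence, and these I would handle as bookkeeping rather than as the crux. I would note that $\Omega^{*}_{\lambda}$ is convex and closed (it is the solution set of the convex inner problem $h(\sbullet,\lambda)$ under assumption (iv)) and that $g(\sbullet,\lambda)$ is strongly convex by assumption (iii), so the selected point $\omega_{h,\lambda}$ is unique and the inner constraint of Eq.~(\ref{eq:hier-bilevel}) is genuinely well-posed; existence of $\lambda_h$ follows from Lemma~\ref{lemma1}, while existence of $\lambda_c$ is assumed in the statement. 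Finally, I would emphasize that the pointwise inequality holds for \emph{any} admissible choice of $\omega_{c,\lambda}$, so the conclusion is independent of which minimizer of $h$ the basic model happens to return, making the comparison robust to the non-uniqueness that motivates the improved formulation in the first place.
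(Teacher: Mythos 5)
Your proposal is correct and follows essentially the same argument as the paper: the pointwise inequality $g(\omega_{h,\lambda},\lambda)\le g(\omega_{c,\lambda},\lambda)$ from $\omega_{c,\lambda}\in\Omega^{*}_{\lambda}$, followed by minimizing over $\lambda$ (you specialize to $\lambda=\lambda_c$ and invoke optimality of $\lambda_h$, which is the same chain the paper writes as an inequality between the two minima). Your added remarks on well-posedness and independence from the choice of $\omega_{c,\lambda}$ are sound but not needed beyond what the paper already assumes.
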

\begin{proof}
Based on the formulation in Eq.~(\ref{eq:inner}), we have:
\begin{equation}
    \omega_{h,\lambda} = \underset{\omega\in\Omega^{*}_{\lambda}}{\arg\min}\ g(\omega,\lambda)
\end{equation}
for each fixed $\lambda$ and since $\omega_{c,\lambda} \in \Omega^{*}_{\lambda}$ so we have:\\
\begin{equation}
\label{eq:lower bound}
    g(\omega_{h,\lambda},\lambda) \le g(\omega_{c,\lambda},\lambda)
\end{equation}
for all $\lambda$, naturally we have:
\begin{equation}
\begin{split}
  g(\omega_{h,\lambda_h},\lambda_{h}) = \underset{\lambda\in\Lambda}{\min}\ g(\omega_{h,\lambda},\lambda) \le \underset{\lambda\in\Lambda}{\min}\ g(\omega_{c,\lambda},\lambda) =g(\omega_{c,\lambda_c},\lambda_{c}) \\
\end{split}
\end{equation}
which proves the desired result.
\end{proof}
Note that we use the claim $\omega_{c,\lambda} \in \Omega^{*}_{\lambda}$ in the above proof. This is based on the fact that algorithms used to solve Eq.~(\ref{eq:classical-bilevel}) usually find a random local minimizer of the inner problem.  In contrast, we choose the "best" minimizer of the inner problem. This is the key that leads to the result stated in Theorem~\ref{theo:lb}. In fact, there is a great gap between the minimums found by the two models in practice, which we will show in the experiments shortly.

\section{Experimental Results}
\label{sec:er}
In this section, we present the experimental results and compare the improved bilevel model with the basic bilevel model. We test over two tasks, \emph{i.e.} Data Hyper-Cleaning and Hyper-Representation Learning. For the basic bilevel model, we apply a gradient-based algorithm similar to our Algorithm~\ref{alg:overall-dynamic} where we replace Algorithm~\ref{alg:inner_dynamic} with a gradient descent optimizer, the detailed description of the algorithm used for the basic bilevel model is shown in Appendix B.2. The hyper-parameters of the experiments for both models are kept the same for fair comparison (hyper-parameter settings for each experiment are shown in Appendix C.1 and C.2). The code is written with Pytorch \cite{paszke2019pytorch} and run on a server with Tesla P40 GPU.

\subsection{Data Hyper-Cleaning}
The data hyper-cleaning task aims to clean a noisy data-set where the labels of some samples are corrupted.
One way to tackle this problem is to learn a "weight" for each sample and treat samples with negative "weight" as corrupted samples. To get these weights, we fit a weighted model on the noisy data-set and tune the weights over a clean validation data-set. This is a bilevel problem as the weighted model needs to be learned before evaluated over the validation data-set. The precise mathematical formulation of this method is shown in Appendix C.1.
We run experiments over five datasets, including MNIST \cite{lecun2010mnist}, Fashion-MNIST \cite{xiao2017fashion}, QMNIST \cite{yadav2019cold}, CIFAR10 \cite{krizhevsky2009learning} and SVHN~\cite{netzer2011reading}. We handcraft the noisy data-set by randomly perturbing the labels of a given percentage of samples (we denote this percentage as $\rho$, \emph{e.g.} $\rho = 0.8$ represents the label of 80\% samples in the training set are corrupted). The details of data-set construction can also be found in Appendix C.1.

\begin{table*}[h]
\vskip 0.15in
\caption{Comparison of F1 score between the improved bilevel model and the basic model}
\centering
\begin{small}
\resizebox{\columnwidth}{!}{
\begin{tabular}{@{}cccccccccccc@{}}
\toprule
\multirow{2}{*}{F1-score} &  
\multicolumn{2}{c}{$\rho = 0.9$} & \phantom{a} & \multicolumn{2}{c}{$\rho = 0.8$} & \phantom{a} & \multicolumn{2}{c}{$\rho = 0.6$}\\
\cmidrule{2-3} \cmidrule{5-6} \cmidrule{8-9}& Improved & Basic && Improved & Basic && Improved & Basic\\
\midrule
MNIST & \textbf{0.618$\pm$0.008} & 0.527$\pm$0.008 && \textbf{0.781$\pm$0.014} & 0.721$\pm$0.011 && \textbf{0.898$\pm$0.004} & 0.877$\pm$0.005\\
Fashion-MNIST & \textbf{0.569$\pm$0.005} & 0.519$\pm$0.012 && \textbf{0.736$\pm$0.008} & 0.697$\pm$0.005 && \textbf{0.851$\pm$0.004} & 0.839$\pm$0.003\\
Q-MNIST & \textbf{0.615$\pm$0.007} & 0.531$\pm$0.010 && \textbf{0.780$\pm$0.005} & 0.721$\pm$0.004 && \textbf{0.900$\pm$0.006} & 0.873$\pm$0.004\\
SVHN & \textbf{0.404$\pm$0.025} & 0.279$\pm$0.009 && \textbf{0.536$\pm$0.010}& 0.443$\pm$0.015 && \textbf{0.646$\pm$0.013}& 0.614$\pm$0.010\\
CIFAR10 & \textbf{0.321$\pm$0.014} & 0.289$\pm$0.007 && \textbf{0.499$\pm$0.008}& 0.456$\pm$0.011&& \textbf{0.649$\pm$0.007}& 0.621$\pm$0.010\\
\bottomrule
\end{tabular}}
\end{small}
\label{tb:result}
\vskip -0.1in
\end{table*}

Now we present some experimental results. We use F1-score
as the metric of the data cleaning algorithm.
From Table \ref{tb:result}, we can see that the F1-scores of our model surpass the basic model over different data-sets and under different hardness levels with a great margin (We report mean and variance over five runs for each experiment). Moreover, the performance gap between our model and the basic model becomes larger as the task becomes harder (larger $\rho$). For harder tasks, it is more important to make use of the clean validation set's information, therefore, this experiment verifies furthermore that our model is much better at exploiting the information of the outer problem, which is an important reason of the superior performance of our model.
What's more, we plot the F1 score curve of MNIST data-set (first row in Table~\ref{tb:result}) in Fig.~\ref{gho-rho} as an example, from which we can see our model converges much faster than the basic model.

\begin{figure}[ht]
\begin{center}
\begin{subfigure}{.31\textwidth}
\includegraphics[width=\linewidth]{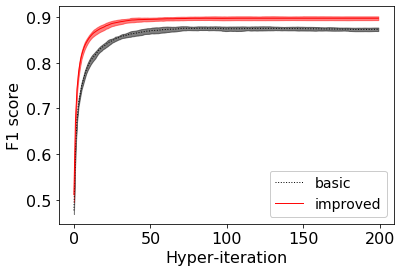}
\caption{$\rho$ = 0.6}
\end{subfigure}
\begin{subfigure}{.31\textwidth}
\includegraphics[width=\linewidth]{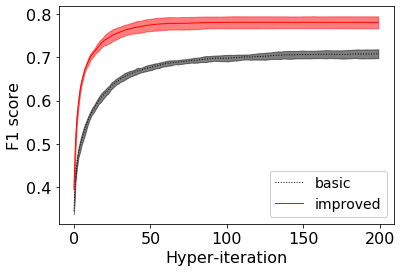}
\caption{$\rho$ = 0.8}
\end{subfigure}
\begin{subfigure}{.31\textwidth}
\includegraphics[width=\linewidth]{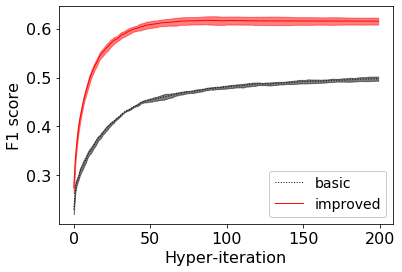}
\caption{$\rho$ = 0.9}
\end{subfigure}
\caption{F1 score curve of MNIST data-set under different noise levels ($\rho$), larger $\rho$ means harder data-set.}
\label{gho-rho}
\end{center}
\vskip -0.2in
\end{figure}


\begin{figure*}[t]
\begin{center}
\begin{subfigure}{.4\textwidth}
\includegraphics[width=\linewidth]{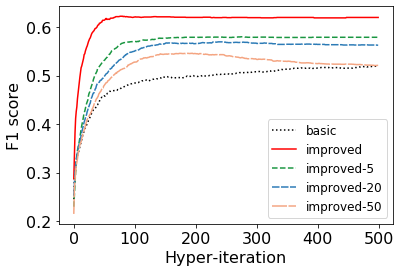}
\caption{MNIST}
\end{subfigure}
\qquad
\begin{subfigure}{.4\textwidth}
\includegraphics[width=\linewidth]{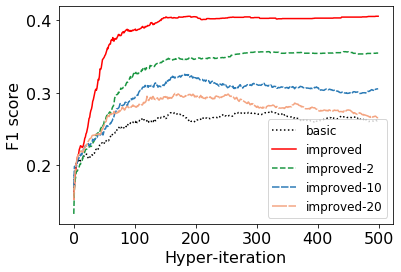}
\caption{SVHN}
\end{subfigure}
\end{center}
\caption{Applying Big-SAM step with different frequencies, "improved-k" means applying BiG-SAM step every k iterations and using gradient descent step otherwise.}
\label{gho-res}
\end{figure*}

\begin{figure*}[t]
\begin{center}
\begin{subfigure}{.4\textwidth}
\includegraphics[width=\linewidth]{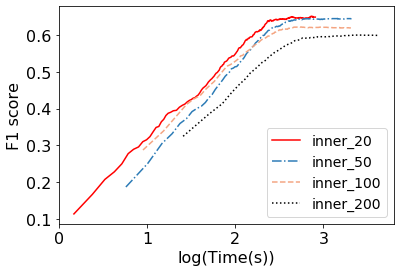}
\caption{MNIST}
\end{subfigure}
\qquad
\begin{subfigure}{.4\textwidth}
\includegraphics[width=\linewidth]{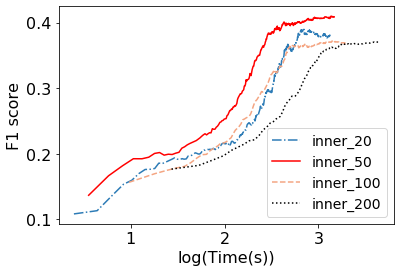}
\caption{SVHN}
\end{subfigure}
\end{center}
\caption{F1 score curve under different number of inner iterations. Note the unit of X-axis is log(Time)}
\label{gho-inner}
\vskip -0.2in
\end{figure*}


\paragraph{Ablation Study.} 
Although our model are mathematically distinct from the basic model, the gradient-based algorithms used to solve these two models share similarities in many aspects. As mentioned in Section~\ref{sec:alg}, if we set $\alpha_k$ to be 1, Algorithm~\ref{alg:inner_dynamic} becomes a gradient descent optimizer and could be used to solve the basic model.
We thus design the following experiments: we replace the BiG-SAM step with ordinary gradient descent step by setting $\alpha_k$ as 1 in Algorithm~\ref{alg:inner_dynamic} and see how the F1 score curve changes. Not surprisingly, the performance of our model degrades when the BiG-SAM step is used less often as shown in Fig.~\ref{gho-res}, however, our model still outperforms the basic model, \emph{e.g.}, when our model performs the BiG-SAM step every 20 steps (only five times if the total steps are 100), our model still outperforms the basic model slightly. This experiment is a great demonstration of the effectiveness of our model, it directly shows how the modified inner problem in our model improves the model performance. Besides, this can also be used as the trade-off between model performance and computation overhead. The more frequent we perform BiG-SAM steps, the better the model performance is, but with more computation overhead, and vice versa.

Finally, we study the relationship between number of inner iterations and model performance.
As shown in Fig.~\ref{gho-inner}, when we train 200 iterations (note that the inner problem is not overfitting in this case), the F1 curve increases much slower and even performs slightly worse compared to that of using less inner iterations. This could be explained by the long back propagation path and the sparse use of outer problem's information, moreover, this also demonstrates that optimizing inner problem solely is not optimal for the whole bilevel problem.

\subsection{Hyper Representation Learning}
We now present the experimental results of the hyper representation learning task. 
Representation learning aims to learn a universal representational mapping that is useful for a bunch of tasks. One way to tackle this problem is to phrase it as bilevel programming. \emph{i.e.} learning the representational mapping is the outer problem, while fitting a simple (linear) model on top of this mapping in a specific task is the inner problem. The precise mathematical formulation can be found in Appendix C.2. For this task, we benchmark over two data-sets, \emph{i.e.} Omniglot \cite{lake2015human} and MiniImageNet \cite{vinyals2016matching}. The detailed construction of training and validation set is also shown in Appendix C.2.

\begin{figure}[!t]
\centering
\begin{subfigure}{.4\textwidth}
\includegraphics[width=\columnwidth]{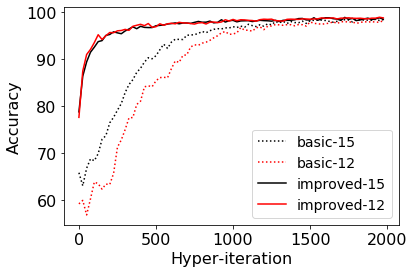}
\caption{5-way 1-shot}
\end{subfigure}
\qquad
\begin{subfigure}{.4\textwidth}
\includegraphics[width=\columnwidth]{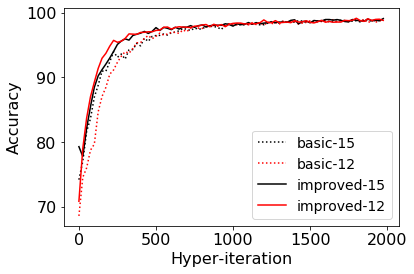}
\caption{5-way 5-shot}
\end{subfigure}\\
\begin{subfigure}{.4\textwidth}
\includegraphics[width=\columnwidth]{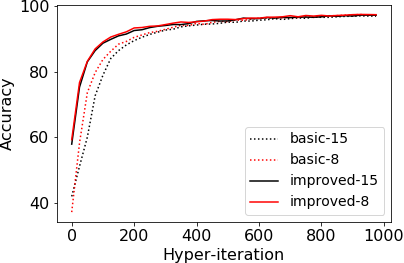}
\caption{20-way 1-shot}
\end{subfigure}
\qquad
\begin{subfigure}{.4\textwidth}
\includegraphics[width=\columnwidth]{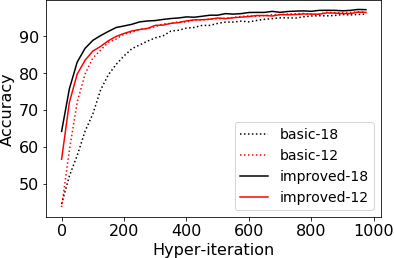}
\caption{20-way 5-shot}
\end{subfigure}
\caption{Accuracy curve for the Omniglot Data-set. "basic-k" means basic bielvel model with k inner iterations, plots of the improved model follow similar rules. }
\label{omni-res}
\end{figure}

\begin{figure}[!t]
\centering
\begin{subfigure}{.4\textwidth}
\includegraphics[width=\columnwidth]{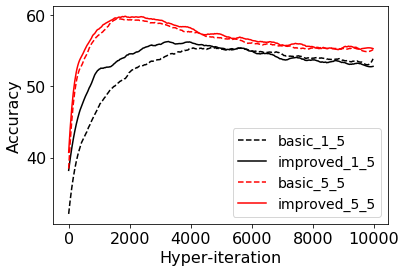}
\caption{5 way}
\end{subfigure}
\qquad
\begin{subfigure}{.4\textwidth}
\includegraphics[width=\columnwidth]{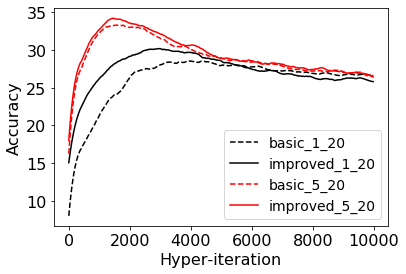}
\caption{20 way}
\end{subfigure}
\caption{Accuracy curve for the MiniImageNet Data-set. "classical-k-n" means basic bilevel model with n-way k-shot, plots of improved model follow similar rules.}
\label{mini-res}
\vskip -0.2in
\end{figure}
In Fig.~\ref{omni-res} and Fig.~\ref{mini-res}, we present the experimental results for Omniglot and MiniImageNet. We test under various settings such as number of classes (way), number of examples per class (shot) and number of inner iterations. As shown in figures, our model achieves better accuracy and converges faster than the basic bilevel model under all settings. For example, for the 20-way 1-shot case of the MiniImageNet, our model reaches the accuracy of 30\%, but the algorithm based on the basic bilevel model gets only about 26\%. What's more, similar to the cleaning task, our model has the greatest performance gain under the hardest setting, \emph{i.e.} the 1-shot case for both data-sets. This phenomenon demonstrates oncemore the effectiveness of our model.

\section{Conclusion}
\label{sec:c}
In this paper, we propose an improved bilevel model in place of the current bilevel formulation, and show both theoretically and empirically the superior performance of our model. We show that our model converges to a lower optimal value and makes more efficient use of the outer problem's information. Moreover, we empirically demonstrate the superior performance of our model over the data hyper-cleaning task and the hyper representation learning task. As many machine learning problems have a bilevel structure, our new model is in great potential to be applied to solve many related applications.
\section*{Broader Impact}

Bilevel programming could be applied to a wide range of machine learning applications, such as hyper-parameter optimization, few-shot learning and generative adversarial learning. Our research focuses on improving the current bilevel formulation.
Generally speaking, our new formulation can lead to a better and faster convergence compared to the current bilevel formulation. As a result, the applications that apply bilevel programming will benefit from our method. Note that our method is not based on any bias in the data-set and has not direct ethical consequences, however, our method is quite general and can be applied to many potential applications, so we encourage researchers to apply our new formulation to many applications and investigate the various ethical effects of our model.


\bibliography{neurips_2020}

\begin{thebibliography}{38}
\providecommand{\natexlab}[1]{#1}
\providecommand{\url}[1]{\texttt{#1}}
\expandafter\ifx\csname urlstyle\endcsname\relax
  \providecommand{\doi}[1]{doi: #1}\else
  \providecommand{\doi}{doi: \begingroup \urlstyle{rm}\Url}\fi

\bibitem[Arjovsky et~al.(2017)Arjovsky, Chintala, and
  Bottou]{arjovsky2017wasserstein}
M.~Arjovsky, S.~Chintala, and L.~Bottou.
\newblock Wasserstein gan.
\newblock \emph{arXiv preprint arXiv:1701.07875}, 2017.

\bibitem[Baydin et~al.(2017)Baydin, Cornish, Rubio, Schmidt, and
  Wood]{baydin2017online}
A.~G. Baydin, R.~Cornish, D.~M. Rubio, M.~Schmidt, and F.~Wood.
\newblock Online learning rate adaptation with hypergradient descent.
\newblock \emph{arXiv preprint arXiv:1703.04782}, 2017.

\bibitem[Bergstra et~al.(2011)Bergstra, Bardenet, Bengio, and
  K{\'e}gl]{bergstra2011algorithms}
J.~S. Bergstra, R.~Bardenet, Y.~Bengio, and B.~K{\'e}gl.
\newblock Algorithms for hyper-parameter optimization.
\newblock In \emph{Advances in neural information processing systems}, pages
  2546--2554, 2011.

\bibitem[Brock et~al.(2018)Brock, Donahue, and Simonyan]{brock2018large}
A.~Brock, J.~Donahue, and K.~Simonyan.
\newblock Large scale gan training for high fidelity natural image synthesis.
\newblock \emph{arXiv preprint arXiv:1809.11096}, 2018.

\bibitem[Couellan and Wang(2016)]{couellan2016convergence}
N.~Couellan and W.~Wang.
\newblock On the convergence of stochastic bi-level gradient methods.
\newblock \emph{Optimization}, 2016.

\bibitem[Dontchev and Zolezzi(2006)]{dontchev2006well}
A.~L. Dontchev and T.~Zolezzi.
\newblock \emph{Well-posed optimization problems}.
\newblock Springer, 2006.

\bibitem[Ferris and Mangasarian(1991)]{ferris1991finite}
M.~C. Ferris and O.~L. Mangasarian.
\newblock Finite perturbation of convex programs.
\newblock \emph{Applied Mathematics and Optimization}, 23\penalty0
  (1):\penalty0 263--273, 1991.

\bibitem[Finn et~al.(2017)Finn, Abbeel, and Levine]{finn2017model}
C.~Finn, P.~Abbeel, and S.~Levine.
\newblock Model-agnostic meta-learning for fast adaptation of deep networks.
\newblock In \emph{Proceedings of the 34th International Conference on Machine
  Learning-Volume 70}, pages 1126--1135. JMLR. org, 2017.

\bibitem[Franceschi et~al.(2017)Franceschi, Donini, Frasconi, and
  Pontil]{franceschi2017forward}
L.~Franceschi, M.~Donini, P.~Frasconi, and M.~Pontil.
\newblock Forward and reverse gradient-based hyperparameter optimization.
\newblock In \emph{Proceedings of the 34th International Conference on Machine
  Learning-Volume 70}, pages 1165--1173. JMLR. org, 2017.

\bibitem[Franceschi et~al.(2018)Franceschi, Frasconi, Salzo, Grazzi, and
  Pontil]{franceschi2018bilevel}
L.~Franceschi, P.~Frasconi, S.~Salzo, R.~Grazzi, and M.~Pontil.
\newblock Bilevel programming for hyperparameter optimization and
  meta-learning.
\newblock \emph{arXiv preprint arXiv:1806.04910}, 2018.

\bibitem[Ghadimi and Wang(2018)]{ghadimi2018approximation}
S.~Ghadimi and M.~Wang.
\newblock Approximation methods for bilevel programming.
\newblock \emph{arXiv preprint arXiv:1802.02246}, 2018.

\bibitem[Goodfellow et~al.(2014)Goodfellow, Pouget-Abadie, Mirza, Xu,
  Warde-Farley, Ozair, Courville, and Bengio]{goodfellow2014generative}
I.~Goodfellow, J.~Pouget-Abadie, M.~Mirza, B.~Xu, D.~Warde-Farley, S.~Ozair,
  A.~Courville, and Y.~Bengio.
\newblock Generative adversarial nets.
\newblock In \emph{Advances in neural information processing systems}, pages
  2672--2680, 2014.

\bibitem[Gulrajani et~al.(2017)Gulrajani, Ahmed, Arjovsky, Dumoulin, and
  Courville]{gulrajani2017improved}
I.~Gulrajani, F.~Ahmed, M.~Arjovsky, V.~Dumoulin, and A.~C. Courville.
\newblock Improved training of wasserstein gans.
\newblock In \emph{Advances in neural information processing systems}, pages
  5767--5777, 2017.

\bibitem[Koch et~al.(2015)Koch, Zemel, and Salakhutdinov]{koch2015siamese}
G.~Koch, R.~Zemel, and R.~Salakhutdinov.
\newblock Siamese neural networks for one-shot image recognition.
\newblock In \emph{ICML deep learning workshop}, volume~2. Lille, 2015.

\bibitem[Krizhevsky et~al.(2009)Krizhevsky, Hinton,
  et~al.]{krizhevsky2009learning}
A.~Krizhevsky, G.~Hinton, et~al.
\newblock Learning multiple layers of features from tiny images.
\newblock 2009.

\bibitem[Kuratowski(2014)]{kuratowski2014topology}
K.~Kuratowski.
\newblock \emph{Topology}, volume~1.
\newblock Elsevier, 2014.

\bibitem[Lacoste et~al.(2014)Lacoste, Larochelle, Laviolette, and
  Marchand]{lacoste2014sequential}
A.~Lacoste, H.~Larochelle, F.~Laviolette, and M.~Marchand.
\newblock Sequential model-based ensemble optimization.
\newblock \emph{arXiv preprint arXiv:1402.0796}, 2014.

\bibitem[Lake et~al.(2015)Lake, Salakhutdinov, and Tenenbaum]{lake2015human}
B.~M. Lake, R.~Salakhutdinov, and J.~B. Tenenbaum.
\newblock Human-level concept learning through probabilistic program induction.
\newblock \emph{Science}, 350\penalty0 (6266):\penalty0 1332--1338, 2015.

\bibitem[LeCun et~al.(2010)LeCun, Cortes, and Burges]{lecun2010mnist}
Y.~LeCun, C.~Cortes, and C.~Burges.
\newblock Mnist handwritten digit database.
\newblock \emph{ATT Labs [Online]. Available: http://yann. lecun.
  com/exdb/mnist}, 2, 2010.

\bibitem[Liu et~al.(2020)Liu, Mu, Yuan, Zeng, and Zhang]{liu2020generic}
R.~Liu, P.~Mu, X.~Yuan, S.~Zeng, and J.~Zhang.
\newblock A generic first-order algorithmic framework for bi-level programming
  beyond lower-level singleton.
\newblock \emph{arXiv preprint arXiv:2006.04045}, 2020.

\bibitem[Lorraine and Duvenaud(2018)]{lorraine2018stochastic}
J.~Lorraine and D.~Duvenaud.
\newblock Stochastic hyperparameter optimization through hypernetworks.
\newblock \emph{arXiv preprint arXiv:1802.09419}, 2018.

\bibitem[Luketina et~al.(2016)Luketina, Berglund, Greff, and
  Raiko]{luketina2016scalable}
J.~Luketina, M.~Berglund, K.~Greff, and T.~Raiko.
\newblock Scalable gradient-based tuning of continuous regularization
  hyperparameters.
\newblock In \emph{International conference on machine learning}, pages
  2952--2960, 2016.

\bibitem[Maclaurin et~al.(2015)Maclaurin, Duvenaud, and
  Adams]{maclaurin2015gradient}
D.~Maclaurin, D.~Duvenaud, and R.~Adams.
\newblock Gradient-based hyperparameter optimization through reversible
  learning.
\newblock In \emph{International Conference on Machine Learning}, pages
  2113--2122, 2015.

\bibitem[Mishra et~al.(2017)Mishra, Rohaninejad, Chen, and
  Abbeel]{mishra2017simple}
N.~Mishra, M.~Rohaninejad, X.~Chen, and P.~Abbeel.
\newblock A simple neural attentive meta-learner.
\newblock \emph{arXiv preprint arXiv:1707.03141}, 2017.

\bibitem[Netzer et~al.(2011)Netzer, Wang, Coates, Bissacco, Wu, and
  Ng]{netzer2011reading}
Y.~Netzer, T.~Wang, A.~Coates, A.~Bissacco, B.~Wu, and A.~Y. Ng.
\newblock Reading digits in natural images with unsupervised feature learning.
\newblock 2011.

\bibitem[Paszke et~al.(2019)Paszke, Gross, Massa, Lerer, Bradbury, Chanan,
  Killeen, Lin, Gimelshein, Antiga, et~al.]{paszke2019pytorch}
A.~Paszke, S.~Gross, F.~Massa, A.~Lerer, J.~Bradbury, G.~Chanan, T.~Killeen,
  Z.~Lin, N.~Gimelshein, L.~Antiga, et~al.
\newblock Pytorch: An imperative style, high-performance deep learning library.
\newblock In \emph{Advances in Neural Information Processing Systems}, pages
  8024--8035, 2019.

\bibitem[Pedregosa(2016)]{pedregosa2016hyperparameter}
F.~Pedregosa.
\newblock Hyperparameter optimization with approximate gradient.
\newblock \emph{arXiv preprint arXiv:1602.02355}, 2016.

\bibitem[Radford et~al.(2015)Radford, Metz, and
  Chintala]{radford2015unsupervised}
A.~Radford, L.~Metz, and S.~Chintala.
\newblock Unsupervised representation learning with deep convolutional
  generative adversarial networks.
\newblock \emph{arXiv preprint arXiv:1511.06434}, 2015.

\bibitem[Sabach and Shtern(2017)]{sabach2017first}
S.~Sabach and S.~Shtern.
\newblock A first order method for solving convex bilevel optimization
  problems.
\newblock \emph{SIAM Journal on Optimization}, 27\penalty0 (2):\penalty0
  640--660, 2017.

\bibitem[Santoro et~al.(2016)Santoro, Bartunov, Botvinick, Wierstra, and
  Lillicrap]{santoro2016meta}
A.~Santoro, S.~Bartunov, M.~Botvinick, D.~Wierstra, and T.~Lillicrap.
\newblock Meta-learning with memory-augmented neural networks.
\newblock In \emph{International conference on machine learning}, pages
  1842--1850, 2016.

\bibitem[Shaban et~al.(2018)Shaban, Cheng, Hatch, and
  Boots]{shaban2018truncated}
A.~Shaban, C.-A. Cheng, N.~Hatch, and B.~Boots.
\newblock Truncated back-propagation for bilevel optimization.
\newblock \emph{arXiv preprint arXiv:1810.10667}, 2018.

\bibitem[Solodov(2007)]{solodov2007explicit}
M.~Solodov.
\newblock An explicit descent method for bilevel convex optimization.
\newblock \emph{Journal of Convex Analysis}, 14\penalty0 (2):\penalty0 227,
  2007.

\bibitem[Vinyals et~al.(2016)Vinyals, Blundell, Lillicrap, Wierstra,
  et~al.]{vinyals2016matching}
O.~Vinyals, C.~Blundell, T.~Lillicrap, D.~Wierstra, et~al.
\newblock Matching networks for one shot learning.
\newblock In \emph{Advances in neural information processing systems}, pages
  3630--3638, 2016.

\bibitem[Willoughby(1979)]{willoughby1979solutions}
R.~A. Willoughby.
\newblock Solutions of ill-posed problems (an tikhonov and vy arsenin).
\newblock \emph{SIAM Review}, 21\penalty0 (2):\penalty0 266, 1979.

\bibitem[Xiao et~al.(2017)Xiao, Rasul, and Vollgraf]{xiao2017fashion}
H.~Xiao, K.~Rasul, and R.~Vollgraf.
\newblock Fashion-mnist: a novel image dataset for benchmarking machine
  learning algorithms.
\newblock \emph{arXiv preprint arXiv:1708.07747}, 2017.

\bibitem[Xu(2004)]{xu2004viscosity}
H.-K. Xu.
\newblock Viscosity approximation methods for nonexpansive mappings.
\newblock \emph{Journal of Mathematical Analysis and Applications},
  298\penalty0 (1):\penalty0 279--291, 2004.

\bibitem[Yadav and Bottou(2019)]{yadav2019cold}
C.~Yadav and L.~Bottou.
\newblock Cold case: The lost mnist digits.
\newblock In \emph{Advances in Neural Information Processing Systems}, pages
  13443--13452, 2019.

\bibitem[Yamada et~al.(2011)Yamada, Yukawa, and
  Yamagishi]{yamada2011minimizing}
I.~Yamada, M.~Yukawa, and M.~Yamagishi.
\newblock Minimizing the moreau envelope of nonsmooth convex functions over the
  fixed point set of certain quasi-nonexpansive mappings.
\newblock In \emph{Fixed-Point Algorithms for Inverse Problems in Science and
  Engineering}, pages 345--390. Springer, 2011.

\end{thebibliography}
\bibliographystyle{abbrvnat}



\appendix

\section{More Related Work}
Eq.~(\ref{eq:opt-bilevel}) has been extensively investigated since it is firstly proposed by \citet{willoughby1979solutions}. More specifically, \citet{willoughby1979solutions} proposes to solve the following problem as a proxy of Eq.~(\ref{eq:opt-bilevel}):
\begin{equation}
\label{eq:reg-bilevel}
\begin{split}
    \underset{\omega\in \Omega}{\min}\ h(\omega) + \lambda g(\omega)
\end{split}
\end{equation}
where $\lambda$ is a regularization hyper-parameter; \citet{ferris1991finite, solodov2007explicit} focus on the regularity conditions of the involved functions and investigate the scenario when $h(\omega)$ is the sum of a smooth function and an indicator function; \citet{yamada2011minimizing} focuses on finding a reasonable regularization parameter $\lambda$ and proposes sufficient conditions of $\lambda$ \emph{s.t.} the solution of Eq.~(\ref{eq:reg-bilevel}) converges to that of Eq.~(\ref{eq:opt-bilevel});
\citet{sabach2017first} proposes an algorithm to solve Eq.~(\ref{eq:opt-bilevel}) directly named BiG-SAM with guaranteed convergence rate of $O(\frac{1}{k})$. 
More specifically, BiG-SAM is a first order method making use of the Sequential Averaging Method \cite{xu2004viscosity} which is originally proposed for fixed point problems. Its procedure is shown in Algorithm~\ref{alg:big-sam}. 

\begin{algorithm}[H]
\caption{Bilevel Gradient Sequential Averaging Method (BiG-SAM)}
\label{alg:big-sam}
\begin{algorithmic}[1]
\STATE {\bfseries Input:} Initial inner variable $\boldsymbol{\omega_{0}}$;\ number of inner iterations \textbf{K};\ hyper-parameters \textbf{t},\ \textbf{s}\ and $\boldsymbol{\alpha}$;
\STATE {\bfseries Output:} Optimal solution $\boldsymbol{\omega}$;
\FOR{k = 0 $\to$ K}
\STATE $\theta_{k+1} = \omega_{k} - t\nabla_{1} h(\omega_{k})$
\STATE $\phi_{k+1} = \omega_{k} - s\nabla_{1} g(\omega_{k})$
\STATE $\omega_{k+1} = \alpha_{k+1}\theta_{k+1} + (1 - \alpha_{k+1}) \phi_{k+1}$
\ENDFOR
\STATE {\bfseries Return} $\omega = \omega_{K}$
\end{algorithmic}
\end{algorithm}


In machine learning, we study Eq.~(\ref{eq:classical-bilevel}) as introduced in Section~\ref{sec:rw}, for which the
best-known methods are gradient-based methods \cite{franceschi2017forward, shaban2018truncated, franceschi2018bilevel}.
Here we present an example of the reverse-mode fully back-propagated method in Algorithm \ref{alg:reverse-mode}. 
\begin{algorithm}[H]
  \caption{Reverse mode fully back-propagated method}
  \label{alg:reverse-mode}
\begin{algorithmic}[1]
  \STATE {\bfseries Input:} Initial outer variable $\boldsymbol{\lambda_0}$;\ number of iterations \textbf{K};\ hyper-parameters $\boldsymbol{\eta_o}$,\ $\boldsymbol{\eta_i}$
  \STATE {\bfseries Output:} Optimal outer variable $\boldsymbol{\lambda_{T}}$;
  \FOR{m = 1 $\to$ T\ -\ 1}
  \FOR{k = 0 $\to$ K}
  \STATE Update $\omega_k$ using gradient descent with learning rate $\eta_i$ (whose dynamics is denoted as $\Phi(\omega_{k},\lambda_m)$)
  \ENDFOR  
  \STATE Let $\hat{\omega}_{\lambda_m} = \omega_K$,\ $\alpha = \nabla_{1} g(\hat{\omega}_{\lambda_m}, \lambda_m)$, $G = \nabla_{2} g(\hat{\omega}_{\lambda_m}, \lambda_m)$
  \FOR{k = K - 1 $\to$ 1}
  \STATE $G = G\  +\  \alpha \nabla_2 \Phi(\omega_{k+1},\lambda_m)$,\ $\alpha\ =\  \alpha \nabla_1 \Phi(\omega_{k+1},\lambda_m)$
  \ENDFOR
  \STATE $\lambda_{m+1} = \lambda_{m} - \eta G$ 
  \ENDFOR
  \STATE {\bfseries Return} $\lambda_T$
\end{algorithmic}
\end{algorithm}


\section{Details of Algorithms}
In this section, we include more details about the proposed algorithms in Section~\ref{sec:ibm}, moreover, we also present the algorithm we use to solve the basic bilevel model in Section~\ref{sec:er}, for the purpose of illustrating the differences between our model and the basic model from an algorithmic perspective.

\subsection{Algorithm of the Improved Bilevel Model}
As mentioned in Section~\ref{sec:alg}, there are two phases within each hyper-iteration. Firstly, we solve the inner problem using Algorithm~\ref{alg:inner_dynamic}, next we apply automatic differentiation to compute the hyper-gradient. For concision, we combine Algorithm~\ref{alg:inner_dynamic} and Algorithm~\ref{alg:overall-dynamic} together here in Algorithm~\ref{alg:combined-dynamic}.

Now we derive the procedure to compute hyper-gradient based on automatic-differentiation. Firstly, we give the precise form of dynamics $\Phi(\omega,\lambda)$
and its partial derivatives, more specifically, the BiG-SAM step update is:
\begin{equation}
\begin{split}
    \omega_{k+1} = \omega_{k} - t\alpha_{k+1}\nabla_{1} h(\omega_{k},\lambda) - s(1 - \alpha_{k+1})\nabla_{1} g(\omega_{k},\lambda)
\end{split}
\end{equation}
Naturally, its derivatives are defined as follows:
\begin{equation}
\begin{split}
    \nabla_1 \Phi(\omega_{k},\lambda) &= I -  t\alpha_{k+1}\nabla_{11} h(\omega_{k},\lambda) - s(1 - \alpha_{k+1})\nabla_{11} g(\omega_{k},\lambda)\\
    \nabla_2 \Phi(\omega_{k},\lambda) &=  -t\alpha_{k+1}\nabla_{12} h(\omega_{k},\lambda) - s(1 - \alpha_{k+1})\nabla_{12} g(\omega_{k},\lambda)
\end{split}
\end{equation}
where $I$ denotes the unit matrix. Finally, the hyper-gradient can be calculated by combining the above results and the chain-rule as:
\begin{equation}
\label{eq:hyper}
\begin{split}
    \nabla g(\hat{\omega}_{\lambda},\lambda)& = \nabla_{2} g(\hat{\omega}_{\lambda}, \lambda)+\\
    &\nabla_{1} g(\hat{\omega}_{\lambda}, \lambda)\times\left(\left(\sum_{k=1}^{K-2}\left(\prod_{j=k+1}^{K-1} \nabla_1 \Phi(\omega_{j},\lambda)\right)\nabla_2 \Phi(\omega_{k},\lambda)\right)+ \nabla_2 \Phi(\omega_{K-1},\lambda)\right)
\end{split}
\end{equation}
which corresponds to Line 10 - 13 in Algorithm~\ref{alg:combined-dynamic} or Line 5 - 9 in Algorithm~\ref{alg:overall-dynamic}.
\begin{algorithm}[H]
  \caption{Algorithm of the improved bilevel model}
  \label{alg:combined-dynamic}
\begin{algorithmic}[1]
  \STATE {\bfseries Input:} Initial outer variable $\boldsymbol{\lambda_0}$,\ number of iterations \textbf{K},\ hyper-parameter $\boldsymbol{\eta}$, \textbf{t},\ \textbf{s}\ and $\boldsymbol{\alpha}$;
  \STATE {\bfseries Output:} Optimal outer variable $\boldsymbol{\lambda_{T}}$;
  \FOR{m = 1 to T\ -\ 1}
  \STATE {\bfseries Initialize:} Inner variable $\boldsymbol{\omega_{0}}$;
  \FOR{k = 0 $\to$ K} \STATE (BiG-SAM step)
    \STATE $\theta_{k+1} = \omega_{k} - t\nabla_{1} h(\omega_{k},\lambda_m)$,\ $\phi_{k+1} = \omega_{k} - s\nabla_{1} g(\omega_{k},\lambda_m)$
    \STATE $\omega_{k+1} = \alpha_{k+1}\theta_{k+1} + (1 - \alpha_{k+1}) \phi_{k+1}$
  \ENDFOR
  \STATE Let $\hat{\omega}_{\lambda_m} = \omega_{K}$,\ $\alpha = \nabla_{1} g(\hat{\omega}_{\lambda_m}, \lambda_m)$, $G = \nabla_{2} g(\hat{\omega}_{\lambda_m}, \lambda_m)$
  \FOR{k = K - 1 $\to$ 1}
  \STATE $G = G + \alpha \nabla_2 \Phi(\omega_{k+1},\lambda_m)$,\ $\alpha = \alpha \nabla_1 \Phi(\omega_{k+1},\lambda_m)$
  \ENDFOR
  \STATE $\lambda_{m+1} = \lambda_{m} - \eta G$ 
  \ENDFOR
  \STATE {\bfseries Return} $\lambda_T$
\end{algorithmic}
\end{algorithm}

\subsection{Algorithm of the Basic Bilevel Model}
Now we introduce the algorithm we use to solve the basic bilevel model. Since the the inner problem of a basic bilevel model is:
\begin{equation}
\label{eq:basic-inner}
\begin{split}
\underset{\omega\in\Omega}{\arg\min}\ h(\omega,\lambda)
\end{split}
\end{equation}
We solve Eq.~(\ref{eq:basic-inner}) with a gradient descent optimizer, while for the calculation of hyper-gradient, we also apply the automatic differentiation. We summarize the algorithm in Algorithm~\ref{alg:basic}. The dynamics $\Phi(\omega, \lambda)$ of the gradient descent used for solving the inner problem Eq.~(\ref{eq:basic-inner}) is:
\begin{equation}
\label{eq:dy-inner}
\begin{split}
    \omega_{k+1} = \omega_{k} - t\nabla_{1} h(\omega_{k},\lambda)
\end{split}
\end{equation}
and its derivatives are defined as follows:
\begin{equation}
\label{eq:grad-inner}
\begin{split}
    \nabla_1 \Phi(\omega_{k},\lambda) = I -  t\nabla_{11} h(\omega_{k},\lambda),\ 
    \nabla_2 \Phi(\omega_{k},\lambda) =  -t\nabla_{12} h(\omega_{k},\lambda)
\end{split}
\end{equation}
We can calculate the hyper-gradient of the basic bilevel model easily by substituting Eq.~(\ref{eq:dy-inner}) and Eq.~(\ref{eq:grad-inner}) into Eq.~(\ref{eq:hyper}).
\begin{algorithm}[H]
  \caption{Algorithm of the basic bilevel model}
  \label{alg:basic}
\begin{algorithmic}[1]
  \STATE {\bfseries Input:} Initial outer variable $\boldsymbol{\lambda_0}$,\ number of iterations \textbf{K},\ hyper-parameter $\boldsymbol{\eta}$, \textbf{t};
  \STATE {\bfseries Output:} Optimal outer variable $\boldsymbol{\lambda_{T}}$;
  \FOR{m = 1 to T\ -\ 1}
  \STATE {\bfseries Initialize:} Inner variable $\boldsymbol{\omega_{0}}$;
  \FOR{k = 0 $\to$ K}
    \STATE $\omega_{k+1} = \omega_{k} - t\nabla_{1} h(\omega_{k},\lambda_m)$
  \ENDFOR
  \STATE Let $\hat{\omega}_{\lambda_m} = \omega_{K}$,\ $\alpha = \nabla_{1} g(\hat{\omega}_{\lambda_m}, \lambda_m)$, $G = \nabla_{2} g(\hat{\omega}_{\lambda_m}, \lambda_m)$
  \FOR{k = K - 1 $\to$ 1}
  \STATE $G = G + \alpha \nabla_2 \Phi(\omega_{k+1},\lambda_m)$,\ $\alpha = \alpha \nabla_1 \Phi(\omega_{k+1},\lambda_m)$
  \ENDFOR
  \STATE $\lambda_{m+1} = \lambda_{m} - \eta G$ 
  \ENDFOR
  \STATE {\bfseries Return} $\lambda_T$
\end{algorithmic}
\end{algorithm}
\section{Experimental Details}

\subsection{Data Hyper-Cleaning} 

In the data hyper-cleaning task, the inner and outer problem are defined as follows:
\begin{equation}
    g(\omega,\lambda) = \sum_{i=1}^{N_{val}} l(y_{i,val}, x_{i,val}\times\omega)\,, \;\;
    h(\omega,\lambda) = \sum_{i=1}^{N_{tr}}\sigma(\lambda_i) \times l(y_{i,tr}, x_{i,tr}\times\omega)\,,
\end{equation}
where $\{(x_{i,tr},y_{i,tr})_{i\in[N_{tr}]}\}$ denotes the training set and $\{(x_{i,val},y_{i,val})_{i\in[N_{val}]}\}$ denotes the validation set. The outer variable $\{\lambda_i\}_{i\in[N_{tr}]}$ is the un-normalized weights for training samples and $\sigma(\sbullet)$ represents the sigmoid function, while the inner variable $\omega$ is weights of the machine learning model and $l$ is the loss function, \emph{e.g.} cross entropy loss. Note that the number of hyper-parameters in this task is of order $O(N_{tr})$, which can be very large depending on the scale of the data-set to be cleaned.

\noindent\textbf{Data construction.} We randomly select 5000 samples to construct the training set and corrupt the label of $\rho$ percentage samples, then we randomly select another 5000 samples to construct the validation set. 

\noindent\textbf{Hyper-parameters.} The model we use for this task is based on the ResNet-18 architecture. For the experiments in Table~\ref{tb:result} and Fig.~\ref{gho-rho}, the inner iterations are set 100 and the outer problem is run until converge. In the improved bilevel model, we set both $t$ and $s$ as 0.001, $\alpha$ as $k^{-0.25}$, and $\eta$ as 1; while in the basic model, we set $t$ as 0.001 and $\eta$ as 1. The learning rates of both models are kept the same on propose for fair comparison. In Fig.~\ref{gho-res}, the hyper-parameters are kept the same other than setting $\alpha_k$ as 1 occasionally. In Fig.~\ref{gho-inner}, we vary the number of inner iterations but keep the other hyper-parameters the same.

\subsection{Hyper Representation Learning} 

In the hyper representation learning, the inner and outer problem are defined as follows:
\begin{equation}
    g(\omega,\lambda) = \sum_{i=1}^{N_{t}} l(m(D_{i,val};\lambda); \omega_i)\,,\;\;
    h(\omega,\lambda) = \sum_{i=1}^{N_{t}} l(m(D_{i,tr};\lambda); \omega_i)\,.
\end{equation}
where $\{D_{i,tr}, D_{t,val}\}_{i\in[N_{t}]}$ denotes the tasks used for training. The outer variable $\lambda$ is the parameters of the representation mapping $m$, while the inner variable $\omega_i$ is the weight of linear model for each task. Note that $\lambda$ is shared between tasks and $\omega_i$ is learned independently for each task.


\noindent\textbf{Data construction.} We resize the images (28 by 28 for Omniglot and 64 by 64 for MiniImageNet) and also create more classes by rotating the images (90$^{\circ}$, 180$^{\circ}$ and 270$^{\circ}$). For each training task $D_i$, we include 5 or 20 classes with 1 or 5 training samples and 10 validation samples for each class, which correspond to $D_{i,tr}$ and $D_{i,val}$ respectively. The validation tasks are constructed similarly for final evaluation. 

\noindent\textbf{Hyper-parameters.} For the mapping m, we apply four convolutional layer with 64 filters each for Omniglot related experiments and ResNet-18 for MiniImageNet related experiments, while $l$ is simply a linear mapping. In Fig.~\ref{omni-res}, for the improved bilevel model, we set both $t$ and $s$ as 0.01, $\alpha$ as $k^{-0.25}$, and $\eta$ as 0.001; while for the basic model, we set $t$ as 0.01 and $\eta$ as 0.001. In Fig.~\ref{mini-res}, for the improved bilevel model, we set both $t$ and $s$ as 0.001, $\alpha$ as $k^{-0.25}$, and $\eta$ as 0.0001; while for the basic model, we set $t$ as 0.001 and $\eta$ as 0.0001. The learning rates of both models are kept the same on propose for fair comparison.

\section{Proof of Theorems}
\noindent\textbf{Proposition~\ref{prop1}.} {\it let $(\lambda_n)_{n\in N}$ be a sequence converges to $\lambda$, with the above assumptions (i) - (iv) hold, $\Omega^{*}_{\lambda}$ is the Kuratowski limit \cite{kuratowski2014topology} of $\{\Omega^{*}_{\lambda_n}\}_{n\in N}$, \emph{i.e.}:

\begin{equation}
\begin{split}
    \underset{n\to\infty}{Li} \Omega^{*}_{\lambda_n} &= \{\omega\in\Omega^*_{\lambda}|\underset{n\to\infty}{\limsup}\ d(\omega, \Omega^{*}_{\lambda_n})=0\}=\Omega^*_{\lambda}\\
    \underset{n\to\infty}{Ls} \Omega^{*}_{\lambda_n} &= \{\omega\in\Omega^*_{\lambda}|\underset{n\to\infty}{\liminf}\ d(\omega, \Omega^{*}_{\lambda_n})=0\}=\Omega^*_{\lambda}\\
\end{split}
\end{equation}
where:\\
\begin{equation}
\begin{split}
    \Omega^{*}_{\lambda} = \{\omega, \omega\in \arg\min\ h(\omega,\lambda)\}\ ,\Omega^{*}_{\lambda_n} = \{\omega, \omega\in \arg\min\ h(\omega,\lambda_n)\}
\end{split}
\end{equation}}

\begin{proof}
Since $(\lambda_n)_{n\in N} \to \lambda$, then for any $\epsilon > 0$, there exists $N_{\epsilon}$ s.t. for any $n > N_{\epsilon}$:\\
\begin{equation}
    ||\lambda_n - \lambda|| < \frac{S_{\nabla^2 h}\epsilon}{L_{\nabla h}}
\end{equation}
Then for any point $\omega \in \Omega^{*}_{\lambda}$, since $h$ is Lipschitz differentiable, we have:\\
\begin{equation}
\label{eq:1}
\begin{split}
    &||\nabla_1 h(\omega, \lambda) - \nabla_1 h(\omega, \lambda_n)||
    \le ||\nabla h(\omega, \lambda) - \nabla h(\omega, \lambda_n)||
    \le L_{\nabla h} \times ||(\omega,\lambda) - (\omega,\lambda_n)||\\
    =& L_{\nabla h} \times ||\lambda - \lambda_n||
    < L_{\nabla h} \times \frac{S_{\nabla^2 h}\epsilon}{L_{\nabla h}}
    = S_{\nabla^2 h}\epsilon
\end{split}
\end{equation}
since $\omega \in \Omega^{*}_{\lambda}$, we have:\\
\begin{equation}
\label{eq:2}
    \nabla_1 h(\omega, \lambda) = 0
\end{equation}
combining (\ref{eq:1}) and (\ref{eq:2}) we got:\\
\begin{equation}
\label{eq:res1}
    ||\nabla_1 h(\omega, \lambda_n)|| < S_{\nabla^2 h} \epsilon
\end{equation}
Next denote:\\
\begin{equation}
    d(\omega, \Omega^{*}_{\lambda_n}) = \inf\{d(\omega,a), a\in \Omega^{*}_{\lambda_n}\}\\
\end{equation}
Now we want to prove $d(\omega, \Omega^{*}_{\lambda_n})<\epsilon$ (we can assume $d(\omega, \Omega^{*}_{\lambda_n}) > 0$ otherwise, we have $d(\omega, \Omega^{*}_{\lambda_n}) = 0 < \epsilon$ as needed). \\
Suppose that the above infimum is acquired at $\omega^{*}$, since that $\Omega$ is compact and $h(\omega,\lambda)$ is continuous, so $\Omega^{*}_{\lambda}$ is compact, we have $\omega^{*}\in \Omega^{*}_{\lambda_n}$, \emph{i.e.}:\\
\begin{equation}
    \nabla_1 h(\omega^{*}, \lambda_n) = 0
\end{equation}
What's more, by the mean value theorem, there exists $\omega_0 \notin \Omega^{*}_{\lambda_n}$:\\
\begin{equation}
    \begin{split}
        &\nabla_1 h(\omega, \lambda_n)
        =\nabla_1 h(\omega, \lambda_n) - \nabla_1 h(\omega^{*}, \lambda_n)
        = \nabla_{11} h(\omega_0, \lambda_n) \times (\omega - \omega^{*})
    \end{split}
\end{equation}
since $\omega_0 \notin \Omega^{*}_{\lambda_n}$, we have $\nabla_1 h(\omega_0 ,\lambda) \ne 0$, with assumption iv, $\nabla_{11} h(\omega_0, \lambda_n)$ is invertible, so we have:\\
\begin{equation}
    \omega - \omega_{*} = \nabla_{11} h(\omega_0, \lambda_n)^{-1}\nabla_1 h(\omega, \lambda_n)
\end{equation}
then we have:\\
\begin{equation}
    \begin{split}
        &d(\omega, \Omega^{*}_{\lambda_n})
        =||\omega - \omega^{*}||
        =||\nabla_{11} h(\omega_0, \lambda_n)^{-1}\nabla_1 h(\omega, \lambda_n)||\\
        \le&||\nabla_{11} h(\omega_0, \lambda_n)^{-1}||\times||\nabla_1 h(\omega, \lambda_n)||
        <\frac{1}{S_{\nabla^2 h}} \times S_{\nabla^2 h}\epsilon
        =\epsilon
    \end{split}
\end{equation}
The last inequality is a result of Assumption (iv) and (\ref{eq:res1}), now we get the conclusion that:\\
\begin{equation}
    \underset{n\to\infty}{\lim} d(\omega, \Omega^{*}_{\lambda_n}) = 0
\end{equation}
therefore, we have the Kuritowski limit inferior:\\
\begin{equation}
\begin{split}
    \underset{n\to\infty}{Li} K_n &= \{\omega\in\Omega^*_{\lambda}|\underset{n\to\infty}{\limsup}\ d(\omega, \Omega^{*}_{\lambda_n})=0\}
    =\{\omega\in\Omega^*_{\lambda}|\underset{n\to\infty}{\lim}\ d(\omega, \Omega^{*}_{\lambda_n})=0\}
    =\Omega^*_{\lambda}
\end{split}
\end{equation}
Similarly, we can get $\underset{n\to\infty}{Ls} K_n = \Omega^*_{\lambda}$, therefore we have:\\
\begin{equation}
    \Omega^{*}_{\lambda_n} \to \Omega^{*}_{\lambda}
\end{equation}
\end{proof}
\noindent\textbf{Lemma~\ref{lemma1}.} {\it With the above assumptions (i) - (iv) hold, $f(\lambda)$ is continuous in $\Lambda$ and admits a minimizer.}
\begin{proof}
For any $\lambda \in \Lambda$, and a sequence $(\lambda_n)_{n\in N} \to \lambda$, since $\Omega$ is compact, there exists a subsequence $(k_n)_{n\in N}$ s.t. $\underset{n\to\infty}{\lim} \omega_{\lambda_{k_n}} \to \hat{\omega}$, now if we can prove $\hat{\omega} = \omega_{\lambda}$, then by the continuity of $g(\omega, \lambda)$, we have:\\
\begin{equation}
\label{eq:res4}
\begin{split}
    \underset{n\to\infty}{\lim} f(\lambda_{k_n}) &= \underset{n\to\infty}{\lim} g(\omega_{\lambda_{k_n}}, \lambda_{k_n})
    = g(\omega_{\lambda}, \lambda)
    =f(\lambda)\\
\end{split}
\end{equation}
Now we prove: $\hat{\omega} = \omega_{\lambda}$, for any $w\in\Omega$, firstly we have:\\
\begin{equation}
\label{eq:res2}
    \begin{split}
        h(\hat{\omega}, \lambda) = \underset{n\to\infty}{\lim} h(\omega_{\lambda_{k_n}}, \lambda_{k_n})
        \le \underset{n\to\infty}{\lim} h(\omega, \lambda_{k_n})
        = h(\omega, \lambda)\\
    \end{split}
\end{equation}
The first equality and last equality is by the continuity of $h(\omega, \lambda)$, and the inequality is because of $\omega_{\lambda_{k_n}}\in \Omega^*_{\lambda_{k_n}}$, now we have proved $\hat{\omega}\in \Omega^*_{\lambda}$. Furthermore, since:\\
\begin{equation}
\label{eq:3}
    \omega_{\lambda_{k_n}} = \underset{\omega\in \Omega^{*}_{\lambda_{k_n}}}{\arg\min}\ g(\omega,\lambda_{k_n})\\
\end{equation}
and denote $I_{\Omega^{*}_{\lambda_{k_n}}}$ as the indicator function where $I_{\Omega^{*}_{\lambda_{k_n}}} = 1$ if $\omega \in \Omega^{*}_{\lambda_{k_n}}$ and 0 otherwise, then for any $w\in\Omega$, we have:\\
\begin{equation}
\label{eq:res3}
    \begin{split}
        g(\hat{\omega}, \lambda) &= \underset{n\to\infty}{\lim} g(\omega_{\lambda_{k_n}}, \lambda_{k_n})
        \le \underset{n\to\infty}{\lim} g(\omega, \lambda_{k_n}) + I_{\Omega^{*}_{\lambda_{k_n}}}(\omega)\\
        &\le \underset{n\to\infty}{\lim} g(\omega, \lambda_{k_n}) + I_{\underset{n\to\infty}{\lim} \Omega^{*}_{\lambda_{k_n}}}(\omega)
        = g(\omega, \lambda) + I_{\Omega^{*}_{\lambda}}(\omega)
    \end{split}
\end{equation}
The first inequality is because of (\ref{eq:3}), the second inequality is by the definition of indicator function and the last equality is because of the convergence result in Proposition~\ref{prop1}. with (\ref{eq:res2}) and (\ref{eq:res3}) we have:\\
\begin{equation}
    \hat{\omega} = \underset{\omega\in \Omega^{*}_{\lambda}}{\arg\min}\ g(\omega,\lambda) = \omega_{\lambda}\\
\end{equation}
then by (\ref{eq:res4}) we prove that $f(\lambda)$ is continuous, and since $\Lambda$ is compact and $f(\lambda)$ is continuous, we can easily get $f(\lambda)$ attains its minimum by the extreme value theorem, which proves the desired result.
\end{proof}
\newpage
\noindent\textbf{Theorem~\ref{th:convergence}.} {\it With the assumptions (i) - (iv) above hold, we have as $K \to \infty$:
\begin{enumerate}
    \item $\min f_K(\lambda) \to \min f(\lambda)$;
    \item $\arg\min f_K(\lambda) \to \arg\min f(\lambda)$;
\end{enumerate}}
To prove Theorem~\ref{th:convergence}. we need to cite 2 results (list below as Lemma~\ref{lemma2} and Lemma~\ref{lemma3}).

\begin{lemma}\label{lemma2} (Adapted from proposition 5. in \citet{sabach2017first} and and Theorem 3.2 from \citet{xu2004viscosity})\\
Let $\{\theta_k\}$, $\{\phi_k\}$ and $\{\omega_k\}$ be sequences generated by BiG-SAM:
\begin{enumerate}[label=(\roman*)]
    \item the sequence $\{\omega_k\}_{k\in N}$ converges to $\hat{\omega}$ which satisfies:
            \begin{equation}
                \langle \nabla_1 g(\hat{\omega}, \lambda) , \omega - \hat{\omega}\rangle \ge 0\ ,\forall \omega \in \Omega^{*}_{\lambda}
            \end{equation}
            thus $\hat{\omega} = \omega_{\lambda}$
    \item and the sequence $\{\omega_k\}_{k\in N}$ also satisfies the following inequality, which leads to the above result  :
    \begin{equation}
        ||\omega_{k+1} - \omega_{\lambda}|| \le (1 - \bar{\alpha})||\omega_{k} - \omega_{\lambda}|| +\bar{\alpha}\bar{\beta}
    \end{equation}
    Where $\bar{\alpha} = \bar{\alpha}(L_{\nabla g}, \sigma_g)$ and $\bar{\beta} = \bar{\beta}(\omega_k; L_{\nabla g}, \sigma_g)$ for sufficiently large k;
\end{enumerate}
\end{lemma}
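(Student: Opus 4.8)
The plan is to recognize Lemma~\ref{lemma2} as the specialization, to the fixed-$\lambda$ regime, of the convergence theory of the Sequential Averaging Method, organized around the two gradient maps that compose a BiG-SAM step. First I would introduce
\begin{equation}
T_h(\omega) = \omega - t\nabla_1 h(\omega,\lambda), \qquad T_g(\omega) = \omega - s\nabla_1 g(\omega,\lambda),
\end{equation}
so that Algorithm~\ref{alg:inner_dynamic} reads $\omega_{k+1} = \alpha_{k+1}T_h(\omega_k) + (1-\alpha_{k+1})T_g(\omega_k)$. Using Assumption (iv) (convexity and $L_{\nabla h}$-Lipschitz gradient of $h$) with a step $t\in(0,2/L_{\nabla h}]$, the map $T_h$ is nonexpansive and $\mathrm{Fix}(T_h)=\Omega^*_\lambda$; using Assumption (iii) ($\sigma_g$-strong convexity and $L_{\nabla g}$-smoothness of $g$) with $s\in(0,2/(\sigma_g+L_{\nabla g})]$, the map $T_g$ is a contraction with modulus $\rho_g=\rho_g(s,\sigma_g,L_{\nabla g})<1$. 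Since $\Omega^*_\lambda$ is nonempty, closed and convex (the minimizer set of the convex $h(\sbullet,\lambda)$ over the compact $\Omega$) and $g(\sbullet,\lambda)$ is strongly convex, the point $\omega_\lambda=\arg\min_{\omega\in\Omega^*_\lambda}g(\omega,\lambda)$ exists, is unique, and is characterized by the first-order optimality condition $\langle\nabla_1 g(\omega_\lambda,\lambda),\omega-\omega_\lambda\rangle\ge 0$ for all $\omega\in\Omega^*_\lambda$. This already pins down the \emph{only} candidate limit, so proving part (i) reduces to showing the iterates converge and that the limit solves this variational inequality.

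Next I would establish the recursive inequality of part (ii). Subtracting $\omega_\lambda$, noting $T_h(\omega_\lambda)=\omega_\lambda$, and splitting $T_g(\omega_k)-\omega_\lambda = (T_g(\omega_k)-T_g(\omega_\lambda)) + (T_g(\omega_\lambda)-\omega_\lambda)$ with $T_g(\omega_\lambda)-\omega_\lambda=-s\nabla_1 g(\omega_\lambda,\lambda)$, the triangle inequality combined with nonexpansiveness of $T_h$ and the contraction of $T_g$ yields
\begin{equation}
\|\omega_{k+1}-\omega_\lambda\| \le (1-\bar\alpha)\|\omega_k-\omega_\lambda\| + \bar\alpha\bar\beta,
\end{equation}
where $\bar\alpha$ collects the averaging weight together with the strong-convexity modulus $1-\rho_g=\bar\alpha(L_{\nabla g},\sigma_g)$, and $\bar\beta$ collects the residual viscosity term $s\|\nabla_1 g(\omega_\lambda,\lambda)\|$; a refinement based on the squared-norm expansion, which surfaces the inner product $\langle\nabla_1 g(\omega_\lambda,\lambda),\omega_\lambda-\omega_k\rangle$, produces the iterate-dependent $\bar\beta=\bar\beta(\omega_k;L_{\nabla g},\sigma_g)$ with $\limsup_k\bar\beta\le 0$. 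The qualifier ``for sufficiently large $k$'' absorbs the stabilization of the averaging parameter $\alpha_k$.

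Finally I would deduce part (i) from the recursion exactly as in \citet{xu2004viscosity, sabach2017first}: the recursion first shows $\{\omega_k\}$ is bounded; then one proves asymptotic regularity $\|\omega_{k+1}-\omega_k\|\to 0$ and invokes the demiclosedness of $I-T_h$ to conclude that every weak cluster point of $\{\omega_k\}$ lies in $\mathrm{Fix}(T_h)=\Omega^*_\lambda$; using the variational inequality satisfied by $\omega_\lambda$ one shows $\limsup_k\langle\nabla_1 g(\omega_\lambda,\lambda),\omega_\lambda-\omega_k\rangle\le 0$, i.e. $\limsup_k\bar\beta\le 0$; and applying the real-sequence lemma (if $a_{k+1}\le(1-\gamma_k)a_k+\gamma_k\delta_k$ with $\gamma_k\in(0,1)$, $\sum_k\gamma_k=\infty$ and $\limsup_k\delta_k\le 0$, then $a_k\to 0$) to $a_k=\|\omega_k-\omega_\lambda\|$ forces $\omega_k\to\omega_\lambda$. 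Uniqueness of the variational-inequality solution then gives $\hat\omega=\omega_\lambda$. I expect the hardest step to be the $\limsup$ estimate of the cross/residual term via demiclosedness and weak convergence: this is the genuine content of the viscosity-approximation argument and the reason the lemma is stated as ``adapted from'' the cited works, whereas the recursion of part (ii) and the operator properties are comparatively routine consequences of the smoothness and convexity assumptions.
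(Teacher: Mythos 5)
Your proposal cannot be compared against the paper's own proof for a simple reason: the paper never proves Lemma~\ref{lemma2}. It is imported as a citation (``adapted from'' Proposition~5 of \citet{sabach2017first} and Theorem~3.2 of \citet{xu2004viscosity}) and used as a black box inside the proof of Theorem~\ref{th:convergence}. What you have written is a faithful reconstruction of the argument in those sources, and its skeleton is sound: decompose the BiG-SAM step of Algorithm~\ref{alg:inner_dynamic} into a nonexpansive map $T_h$ with $\mathrm{Fix}(T_h)=\Omega^{*}_{\lambda}$ and a contraction $T_g$, characterize $\omega_\lambda$ by the variational inequality, derive the recursion of part (ii) by the triangle inequality, and close part (i) with the viscosity-approximation loop (boundedness, asymptotic regularity, demiclosedness, the $\limsup$ estimate of the residual term, and Xu's real-sequence lemma). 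You also correctly identify that the crude triangle-inequality recursion with $\bar\beta \approx s\|\nabla_1 g(\omega_\lambda,\lambda)\|$ only bounds the iterates near $\omega_\lambda$, and that the squared-norm refinement surfacing $\langle \nabla_1 g(\omega_\lambda,\lambda), \omega_\lambda-\omega_k\rangle$ is what actually forces convergence --- that is indeed the genuine content of the cited works. Two caveats. First, your phrase ``the qualifier `for sufficiently large $k$' absorbs the stabilization of the averaging parameter'' hides an essential hypothesis: the viscosity argument requires the weight on the contraction step (here $1-\alpha_{k+1}$, multiplying $T_g$) to tend to zero with divergent sum; without an explicit schedule assumption the iterates converge at best to a fixed point of an averaged operator, not to $\omega_\lambda$. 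This is worth stating explicitly, all the more so because the paper's own reported choice $\alpha_k = k^{-1/4}\to 0$ puts weight tending to \emph{one} on $T_g$, the opposite regime from what \citet{xu2004viscosity} and \citet{sabach2017first} require. Second, the identification $\mathrm{Fix}(T_h)=\Omega^{*}_{\lambda}$ via zeros of $\nabla_1 h$ tacitly assumes the minimization over the compact $\Omega$ is effectively unconstrained (minimizers interior); \citet{sabach2017first} handle the constrained case with a prox-grad step, a subtlety the paper also glosses over.
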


\begin{lemma}\label{lemma3} (Theorem A.1 in \citet{franceschi2018bilevel}) Let $\phi_K$ and $\phi$ be lower semi-continuous functions defined on a compact set $\Lambda$. Suppose that $\phi_K$ converges uniformly to $\phi$ on $\Lambda$ as $K\to\infty$. Then:
\begin{enumerate}[label=(\roman*)]
    \item $\inf \phi_K \to \inf \phi$
    \item $\arg\min \phi_K \to \arg\min \phi$
\end{enumerate}
\end{lemma}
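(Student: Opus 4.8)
The plan is to treat this as a classical well-posedness/stability result and to prove the two claims separately, with part (i) feeding directly into part (ii). First I would record the structural facts that make the statement meaningful: since each $\phi_K$ and $\phi$ is lower semi-continuous on the compact set $\Lambda$, the generalized Weierstrass theorem guarantees that each attains its infimum, so $\min\phi_K$, $\min\phi$, $\arg\min\phi_K$ and $\arg\min\phi$ are all well-defined and nonempty. The convergence asserted in (ii) I would read in the Kuratowski sense already used in Proposition~\ref{prop1}.

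For (i), I would argue by a uniform-convergence sandwich. Fix $\epsilon>0$; uniform convergence yields $N$ with $\sup_{\lambda\in\Lambda}|\phi_K(\lambda)-\phi(\lambda)|<\epsilon$ for all $K>N$, hence the pointwise bounds $\phi(\lambda)-\epsilon<\phi_K(\lambda)<\phi(\lambda)+\epsilon$. Taking the infimum over $\lambda$ on each side gives $\min\phi-\epsilon\le\min\phi_K\le\min\phi+\epsilon$, so that $|\min\phi_K-\min\phi|\le\epsilon$ for every $K>N$, which is exactly $\min\phi_K\to\min\phi$.

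For (ii) I would combine compactness with lower semi-continuity. Pick $\lambda_K^*\in\arg\min\phi_K$; by compactness of $\Lambda$ every subsequence admits a further subsequence $\lambda_{K_j}^*\to\lambda^*$, and it suffices to show each such cluster point satisfies $\lambda^*\in\arg\min\phi$. Writing $\phi(\lambda_{K_j}^*)=\phi_{K_j}(\lambda_{K_j}^*)+\bigl(\phi-\phi_{K_j}\bigr)(\lambda_{K_j}^*)$, the first summand equals $\min\phi_{K_j}$, which tends to $\min\phi$ by part (i), while the second is dominated by $\|\phi-\phi_{K_j}\|_\infty\to0$; hence $\phi(\lambda_{K_j}^*)\to\min\phi$. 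Lower semi-continuity of $\phi$ then yields $\phi(\lambda^*)\le\liminf_j\phi(\lambda_{K_j}^*)=\min\phi$, and since $\phi(\lambda^*)\ge\min\phi$ trivially, $\lambda^*\in\arg\min\phi$. This establishes the upper Kuratowski inclusion: every limit point of minimizers of $\phi_K$ minimizes $\phi$.

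The main obstacle is the lower Kuratowski inclusion, namely that every $\lambda^*\in\arg\min\phi$ is approached by some sequence $\lambda_K\in\arg\min\phi_K$. This is the genuinely delicate direction, because lower semi-continuity supplies only one-sided inequalities and uniform convergence by itself does not force the minimizers to track a prescribed target when $\arg\min\phi$ is not a singleton. I would dispose of it by noting that it is automatic whenever $\arg\min\phi$ reduces to a single point, which is the regime relevant to our analysis where well-posedness of the outer problem pins down the limiting minimizer, and otherwise by constructing a recovery sequence: given $\lambda^*$, use the uniform bound to control $\phi_K(\lambda^*)$ near $\min\phi_K$ and then select nearby elements of $\arg\min\phi_K$. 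Reconciling this construction with the set-valued notion of convergence is where the real work lies; the two preceding steps are otherwise routine.
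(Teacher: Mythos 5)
Your sandwich argument for (i) and your cluster-point argument for the first half of (ii) are both correct, and they are exactly the intended proof of this result; note that the paper itself never proves this lemma at all --- it is imported verbatim as Theorem A.1 of \citet{franceschi2018bilevel} --- so the relevant comparison is with that source, whose argument coincides with your first two steps (lower semi-continuity plus compactness for existence, uniform convergence of values for the infima, and the subsequence/lsc argument showing every cluster point of minimizers of $\phi_K$ minimizes $\phi$).

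The problem is your final paragraph. The lower Kuratowski inclusion you set out to prove --- that every $\lambda^*\in\arg\min\phi$ is approached by points of $\arg\min\phi_K$ --- is not merely delicate; it is \emph{false} under the stated hypotheses, so no recovery-sequence construction can close this gap. Take $\Lambda=[0,1]$, $\phi\equiv 0$, and $\phi_K(\lambda)=\lambda/K$: all functions are continuous, $\phi_K\to\phi$ uniformly, yet $\arg\min\phi_K=\{0\}$ for every $K$ while $\arg\min\phi=[0,1]$, so the minimizer $\lambda^*=1$ of $\phi$ remains at distance $1$ from every $\arg\min\phi_K$. Your construction breaks exactly at the step ``select nearby elements of $\arg\min\phi_K$'': the uniform bound does make $\lambda^*$ an approximate minimizer of $\phi_K$ (indeed $\phi_K(\lambda^*)\le \min\phi_K + 2\|\phi_K-\phi\|_\infty$), but approximate minimizers need not lie near exact minimizers without additional structure (e.g.\ strong convexity of each $\phi_K$, which is not assumed). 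The resolution is that the notation $\arg\min\phi_K\to\arg\min\phi$ in this lemma --- both in \citet{franceschi2018bilevel} and in the well-posedness literature it draws on \cite{dontchev2006well} --- means precisely the one-sided statement you already established: sequences $\lambda_K\in\arg\min\phi_K$ subconverge and all their cluster points lie in $\arg\min\phi$, equivalently $\sup_{\lambda\in\arg\min\phi_K} d(\lambda,\arg\min\phi)\to 0$. Your remark that the two-sided convergence becomes automatic when $\arg\min\phi$ is a singleton is correct (then the whole sequence of minimizers converges to that point), but the paper's assumptions do not force $f(\lambda)$ to have a unique minimizer --- strong convexity is assumed in $\omega$, not in $\lambda$ --- so the lemma should simply be read in the one-sided sense, and your proof is then already complete once the last paragraph is deleted.
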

\begin{proof}
With the two lemmas above, we come back to the proof of Theorem~\ref{th:convergence}. From Lemma 2.ii, we know that the convergence of the sequence $\{\omega_k\}_{k\in N}$ depends on $L_{\nabla g}$, $\sigma_g$, which is independent of $\lambda$, so we have that $\{\omega_k\}_{k\in N}$ converges to $\omega_\lambda$ uniformly; then since g($\omega$ ,$\lambda$) is Lipschitz continuous, we have:\\
\begin{equation}
    ||f_{K}(\lambda) - f(\lambda)|| = ||g(\omega_{K}, \lambda) - g(\omega_{\lambda}, \lambda)|| \le L_{g} ||\omega_K - \omega_{\lambda}||
\end{equation}
So we have $f_{K}(\lambda) \to f(\lambda)$ uniformly, what's more, $f_K(\lambda)$ is continuous based on its definition and $f(\lambda)$ is continuous and attains a minimizer because of Lemma~\ref{lemma1}, so we satisfy the conditions in Lemma~\ref{lemma3}, then we can prove the desired result in Theorem~\ref{th:convergence} using Lemma~\ref{lemma3}.\\
\end{proof}

\end{document}